\PassOptionsToPackage{table}{xcolor}
\documentclass[pdflatex,sn-mathphys-num]{sn-jnl}

\usepackage{graphicx}
\usepackage{amsmath,amssymb,amsfonts}
\usepackage{mathtools}
\usepackage{bm}
\usepackage{booktabs}
\usepackage{multirow}
\usepackage{array}
\usepackage{tabularx}
\usepackage{placeins}
\usepackage{xcolor}

\theoremstyle{thmstyleone}
\newtheorem{theorem}{Theorem}

\newtheorem{proposition}[theorem]{Proposition}
\newtheorem{corollary}[theorem]{Corollary}

\theoremstyle{thmstyletwo}
\newtheorem{remark}{Remark}

\theoremstyle{thmstylethree}
\newtheorem{definition}{Definition}

\newcolumntype{Y}{>{\centering\arraybackslash}X}
\newcolumntype{L}{>{\raggedright\arraybackslash}X}
\newcommand{\best}[1]{\textbf{#1}}
\newcommand{\second}[1]{\underline{#1}}
\newcommand{\pc}{p_c}
\newcommand{\pa}{p_\alpha}
\newcommand{\upar}{\ensuremath{\uparrow}}
\newcommand{\downar}{\ensuremath{\downarrow}}

\definecolor{bestgreen}{HTML}{E8F3EC}
\definecolor{warningred}{HTML}{F9E9E8}
\definecolor{softblue}{HTML}{EAF2F8}

\begin{document}

\title[Control Allocation in Neural Network Optimization]{Control Allocation in Neural Network Optimization: Joint Affine Control of Weight and Bias Updates}

\author[1]{\fnm{Zhang} \sur{Gongyue}}
\author[1]{\fnm{Sheng} \sur{Yixuan}}
\author[1]{\fnm{Wang} \sur{Zhiyong}}
\author[1]{\fnm{Liu} \sur{Donghan}}
\author[1]{\fnm{Ren} \sur{Weihong}}
\author*[1]{\fnm{Liu} \sur{Honghai}}\email{honghai.liu@icloud.com}

\affil[1]{\orgdiv{State Key Laboratory of Robotics and Systems}, \orgname{Harbin Institute of Technology Shenzhen}, \orgaddress{\city{Shenzhen}, \postcode{518055}, \country{China}}}

\abstract{Optimization algorithms determine not only the magnitude of a neural-network update but also how that update is distributed across parameter channels. We study whether this distribution can be treated as a controllable quantity independently of global training progress. We define operational update allocation through normalized channel energies and analyze two scalar controls: a coordinate-preconditioning exponent and an affine spectral exponent that scales the bias column of an augmented weight--bias matrix. At a frozen state, a common nonzero step-size multiplier leaves normalized allocation unchanged; the coordinate exponent yields affine pairwise log-odds with an explicit inverse; and the affine exponent induces a rank-one positive-semidefinite Gram perturbation and a logistic raw-participation law. We further separate raw affine participation, spectral gain, and the decoded physical bias update, and show that finite polynomial spectral iterations preserve singular subspaces. Same-state replay verifies the exact control laws. On a five-seed controlled benchmark, intermediate controls improve held-out and worst-group metrics, whereas excessive affine control causes underfitting. A four-task single-seed transfer study provides descriptive corroboration. These results establish instantaneous allocation control and a bounded empirical operating regime, but do not imply a task-independent generalization ordering.}

\keywords{neural network optimization, implicit bias, adaptive preconditioning, spectral optimization, affine layers, control allocation}

\maketitle
\section{Introduction}
\label{sec:introduction}

A learning objective specifies which parameter values have low loss, but it
does not in general specify which low-loss solution an optimization algorithm
will select or how the training signal will be distributed on the way there.
This distinction is central in overparameterized models.  Under explicit
assumptions, gradient-based methods can select max-margin or norm-related
solutions even when the corresponding preference is absent from the stated
objective \cite{soudry2018implicit,gunasekar2018geometry,gunasekar2018conv}.
Related results establish directional convergence for broad classes of deep
homogeneous models \cite{ji2020directional}.  These works make precise an
important endpoint view of implicit bias: the optimization dynamics can
select a particular limiting direction or geometry among many candidates.

Modern optimizers also modify the geometry of individual updates.  Diagonal
adaptive methods rescale coordinates using accumulated gradient statistics
\cite{duchi2011adagrad,kingma2015adam}, while matrix methods exploit tensor or
layer structure \cite{gupta2018shampoo}.  Such transformations are not merely
global changes of step size.  They alter relative update magnitudes across
coordinates or singular directions and can therefore lead to different
solutions and different empirical behavior
\cite{wilson2017marginal,balles2018dissecting}.  Partial adaptivity
\cite{chen2020padam} and square-root-free variants
\cite{lin2024squareroot} further show that the strength and form of
preconditioning are meaningful design choices rather than fixed conventions.

Endpoint characterizations, however, do not by themselves describe how a
finite-time optimizer routes its update among specified parameter channels at
a given state.  The information-allocation view introduced by
\cite{zhang2026anatomy} addresses this training-time question through
diagnostics of gradient demand, update injection, and channel persistence.
That framework is observational: it can reveal that two training procedures
write their updates differently, but it does not yet identify when an
optimizer input is an allocation control, what allocations are reachable, or
which conclusions remain valid when the training state is allowed to change.
The joint affine construction of \cite{zhang2026joint} provides a concrete
spectral intervention by appending the bias momentum to the weight-momentum
matrix.  It leaves open the more general control question: how should the
strength of coordinate and affine participation be parameterized, and what can
be proved exactly before any task-dependent performance claim is made?

This paper formulates that question as an instantaneous control problem.  For
fixed linear channel maps $C_1,\ldots,C_K$ and a nonzero realized update
$\Delta\theta$, we define the operational allocation
\begin{equation}
    a_k(\Delta\theta)
    :=
    \frac{\lVert C_k\Delta\theta\rVert_2^2}
    {\sum_{j=1}^{K}\lVert C_j\Delta\theta\rVert_2^2}.
    \label{eq:intro_allocation}
\end{equation}
The term \emph{information} refers only to this prescribed distribution of
update energy; it does not invoke Shannon information.  We freeze the current
parameters, optimizer state, and stochastic input, and then study the map from
an optimizer control to $a(\Delta\theta)$.  This frozen-state convention is
deliberate.  It separates direct actuation by the optimizer from the indirect
effect of visiting a different state later in training.

Two controls instantiate the framework.  The coordinate exponent $p_c$ acts
through
\begin{equation}
    \Delta\theta_i(p_c)
    =-\eta\frac{m_i}{(v_i+\varepsilon)^{p_c}},
    \label{eq:intro_coordinate_rule}
\end{equation}
whereas the affine exponent $p_\alpha$ acts through the augmented matrix
\begin{equation}
    A(p_\alpha)
    =\bigl[M_W,\,d^{p_\alpha}m_b\bigr].
    \label{eq:intro_affine_rule}
\end{equation}
Here $m_i$ and $v_i$ are frozen coordinate moments, $M_W$ and $m_b$ are the
weight and bias momentum terms of an affine layer, and $d$ is the chosen layer
scale.  These parameterizations yield exact input--allocation laws.  For
active coordinates $i$ and $j$,
\begin{equation}
    \log\frac{a_i(p_c)}{a_j(p_c)}
    =
    \log\frac{m_i^2}{m_j^2}
    +2p_c\log\frac{v_j+\varepsilon}{v_i+\varepsilon},
    \label{eq:intro_coordinate_logodds}
\end{equation}
so the pairwise allocation log-odds are affine in $p_c$ and are exactly
invertible whenever $v_i\neq v_j$.  Likewise,
\begin{equation}
    A(p_\alpha)A(p_\alpha)^\top
    =M_WM_W^\top+d^{2p_\alpha}m_bm_b^\top,
    \label{eq:intro_rankone}
\end{equation}
which exposes affine participation as a rank-one positive-semidefinite
perturbation.  When both component energies are nonzero, the raw bias-column
share obeys a logistic law whose logit has slope $2\log d$ with respect to
$p_\alpha$; the slope is nonzero when $d\neq1$.  In contrast, when a single
nonzero learning-rate multiplier scales the entire update, it changes total
update magnitude but cancels from the normalized allocation in
\eqref{eq:intro_allocation} at the frozen state.

The empirical study follows the same separation of claims.  Same-state replay
tests the one-step laws with parameters, mini-batches, and optimizer moments
held fixed.  A controlled benchmark with known stable, spurious, and noise
channels then evaluates full training trajectories over five paired random
seeds.  Finally, a four-task transfer suite is reported as descriptive
corroboration because only one seed is available there.  The controlled sweeps
show a bounded useful regime and a clear over-control regime: intermediate
values can improve held-out and worst-group metrics, whereas sufficiently
large affine exponents cause training collapse.  This non-monotonicity is not
an exception to the theory.  The theory specifies how allocation responds to
the controls; it does not assert that more allocation pressure must improve a
task metric.

The contributions are as follows.
\begin{itemize}
    \item We define normalized update allocation using fixed channel maps and
    distinguish global progress control, frozen-state allocation actuation,
    and multi-step trajectory behavior.

    \item For coordinate preconditioning, we derive an exact affine law for
    pairwise allocation log-odds, an explicit inverse on the reachable set,
    and a distribution-level monotonicity identity.

    \item For affine spectral control, we derive the rank-one Gram
    perturbation, the raw participation logit, conditional attenuation laws,
    and modal-invariance statements for finite polynomial spectral maps.  The
    analysis also separates raw affine participation, preconditioner gain,
    and the decoded physical bias update.

    \item We connect the frozen-state results to trajectory-level evidence by
    combining same-state replay, complete control sweeps, five-seed paired
    comparisons, mechanism diagnostics, and explicitly limited single-seed
    transfer results.
\end{itemize}

The scope of the paper is intentionally narrower than a universal
generalization claim.  Instantaneous reachability is not controllability of
the full stochastic training recursion; a larger raw bias share need not
produce a larger physical bias update after spectral shaping and decoding; and
no value of $p_c$ or $p_\alpha$ is claimed to dominate across all tasks.  These
boundaries are part of the result: they identify what is controlled exactly,
what is only conditionally characterized, and what must be established
empirically.

\section{Related Work}
\label{sec:related_work}

\subsection{Implicit bias and optimization geometry}
\label{subsec:rw_implicit_bias}

Implicit-bias theory asks which solution is selected by an optimization
procedure when the training objective alone does not identify a unique one.
For unregularized logistic regression on linearly separable data, gradient
descent converges in direction to the hard-margin solution under the
conditions analyzed by \cite{soudry2018implicit}.  The selected solution can
depend on the geometry of the optimization method
\cite{gunasekar2018geometry} and on the parameterization: for example, linear
convolutional and fully connected parameterizations can induce different
predictor-space biases despite representing the same linear function class
\cite{gunasekar2018conv}.  Directional convergence and gradient alignment have
also been studied for deep homogeneous networks under explicit structural and
regularity assumptions \cite{ji2020directional}.

The present work is complementary to these asymptotic and solution-selection
results.  It does not infer a limiting norm or margin, and it does not require
the iterates to converge.  Instead, it conditions on a realized training state
and asks how an optimizer input changes the normalized distribution of the
next update.  This local statement can be tested by replay at the same state,
but by itself it does not determine the endpoint of the resulting trajectory.

\subsection{Coordinate-adaptive optimization and partial adaptivity}
\label{subsec:rw_coordinate_adaptivity}

AdaGrad constructs coordinate-dependent steps from accumulated gradient
information and gives data-dependent regret guarantees in online and
stochastic settings \cite{duchi2011adagrad}.  Adam combines moving averages of
first and second moments \cite{kingma2015adam}; subsequent work exhibited a
convex setting in which the original method can fail to converge and proposed
AMSGrad as a remedy \cite{reddi2018convergence}.  The behavior of adaptive
methods is not exhausted by their convergence rates.  Wilson et al.
constructed overparameterized examples in which adaptive and non-adaptive
methods select different solutions and reported empirical generalization
differences on several deep-learning tasks \cite{wilson2017marginal}.
Balles and Hennig separated sign and variance-related components of Adam to
study their distinct effects \cite{balles2018dissecting}.

Several methods expose the strength or algebraic form of adaptivity as a
design choice.  Padam introduces a partial-adaptivity exponent that connects
SGD-like and Adam/AMSGrad-like updates and analyzes convergence to stationary
points in stochastic nonconvex optimization \cite{chen2020padam}.  Lin et al.
study square-root-free adaptive methods from a second-order perspective and
show that changing the matrix or diagonal root can materially change
optimization behavior \cite{lin2024squareroot}.  Our coordinate control is
closest algebraically to partial adaptivity, but the object of analysis is
different: we derive the exact frozen-state response of normalized channel
energy, including pairwise reachability and degeneracy conditions.  These
identities neither replace convergence analyses nor imply Padam-style
generalization conclusions.

\subsection{Matrix and spectral optimization}
\label{subsec:rw_spectral_optimization}

Matrix-aware optimizers use structure that is lost when a parameter tensor is
treated as a flat collection of unrelated coordinates.  Shampoo maintains
mode-wise preconditioners and applies matrix fractional powers to tensor
gradients \cite{gupta2018shampoo}.  Modular duality interprets optimization as
mapping gradients from dual to primal spaces and derives layerwise maps,
including Newton--Schulz implementations for matrix-valued layers
\cite{bernstein2024modular}.  Muon applies matrix orthogonalization to
momentum updates, and large-scale experiments have studied how this update can
be scaled to language-model training \cite{liu2025muon}.

These methods motivate a spectral view of update allocation: a matrix map can
redistribute magnitude among singular modes even when a scalar learning rate
cannot.  Most such constructions are naturally stated for matrix-valued
weights.  An affine layer, however, contains both a matrix $W$ and a vector
$b$.  Treating $b$ with a separate vector optimizer prevents it from
participating in the same singular geometry.  Joint affine spectral shaping
addresses this mismatch by applying a spectral map to an augmented
weight--bias momentum matrix \cite{zhang2026joint}.  The present analysis
extends that construction from a fixed intervention to a continuous control
$p_\alpha$, and distinguishes three quantities that can otherwise be
conflated: the raw energy of the appended column, its gain under the spectral
map, and the final physical bias update after decoding and clipping.

\subsection{Position relative to the two antecedent studies}
\label{subsec:rw_prior_position}

Because this paper builds directly on two earlier studies, the incremental
boundary is stated explicitly.
\begin{description}
    \item[Information-allocation diagnostics.]
    \cite{zhang2026anatomy} introduces the training-time allocation viewpoint,
    observable channel diagnostics, and collapse--persistence comparisons.
    The present paper does not re-claim those diagnostics.  It turns the
    viewpoint into a controlled map, proves exact input--allocation relations,
    identifies reachable and degenerate cases, and separates one-step
    actuation from trajectory-level effects.

    \item[Joint affine spectral shaping.]
    \cite{zhang2026joint} introduces the augmented affine momentum matrix and
    evaluates a particular joint regularized-inverse optimizer.  The present
    paper does not present augmentation alone as a new contribution.  It uses
    a continuous affine exponent, derives the associated rank-one and logistic
    laws, analyzes conditional spectral attenuation and finite polynomial
    maps, and tests both useful and collapse regimes with same-state replay and
    complete sweeps.
\end{description}

The resulting contribution is therefore a control-theoretic formalization and
validation of allocation mechanisms, not a renaming of the earlier diagnostic
framework or a second report of the same fixed affine optimizer.  This
positioning also determines the permissible claims: theorems concern
frozen-state allocation maps; full-run experiments concern observed
trajectories; and task-independent performance ordering is neither assumed nor
asserted.

\section{Information Allocation as an Instantaneous Control Problem}
\label{sec:allocation_theory}

This section formalizes the allocation viewpoint introduced in
\cite{zhang2026anatomy} and the joint affine construction of
\cite{zhang2026joint}.  The word \emph{information} is used operationally:
it refers to how a realized parameter update is distributed among specified
parameter channels.  No Shannon-information interpretation is assumed.

\subsection{Controlled dynamics and operational allocation}
\label{subsec:controlled_dynamics}

Let the training state at step $t$ be $s_t=(\theta_t,\zeta_t)$, where
$\theta_t\in\mathbb{R}^{P}$ denotes the parameters and $\zeta_t$ contains all
optimizer state variables.  Let $\xi_t$ denote the stochastic input at that
step, including the sampled mini-batch, and let $u_t$ collect the user-selected
controls.  Training is represented abstractly by
\begin{equation}
    s_{t+1}=F_t(s_t,\xi_t,u_t),
    \qquad
    \Delta\theta_t
    :=\theta_{t+1}-\theta_t
    =Q_t(s_t,\xi_t,u_t).
    \label{eq:controlled_training}
\end{equation}
This representation is only a bookkeeping device; no differentiability or
linearity of $F_t$ is assumed unless stated explicitly.

Fix linear channel maps
\begin{equation}
    C_k:\mathbb{R}^{P}\longrightarrow\mathcal{H}_k,
    \qquad k=1,\ldots,K,
    \label{eq:channel_maps}
\end{equation}
where every $\mathcal{H}_k$ is a finite-dimensional Euclidean space.  For a
nonzero update $\Delta\theta$ satisfying
\begin{equation}
    D(\Delta\theta)
    :=\sum_{j=1}^{K}\lVert C_j\Delta\theta\rVert_2^2>0,
    \label{eq:allocation_denominator}
\end{equation}
define the channel energy and normalized allocation by
\begin{equation}
    E_k(\Delta\theta):=\lVert C_k\Delta\theta\rVert_2^2,
    \qquad
    a_k(\Delta\theta)
    :=\frac{E_k(\Delta\theta)}{D(\Delta\theta)}.
    \label{eq:operational_allocation}
\end{equation}
Then $a_k\geq 0$ and $\sum_k a_k=1$.  If the $C_k$ are mutually orthogonal
projectors satisfying $\sum_k C_k=I$, the $E_k$ form an orthogonal energy
partition of $\Delta\theta$.  For overlapping or non-projective channel maps,
\eqref{eq:operational_allocation} remains a normalized diagnostic, but it is
not an orthogonal conservation law.

For fixed $(t,s,\xi)$, define the frozen-state allocation response
\begin{equation}
    \Psi_{t,s,\xi}(u)
    :=a\bigl(Q_t(s,\xi,u)\bigr),
    \label{eq:frozen_response}
\end{equation}
whenever the denominator in \eqref{eq:allocation_denominator} is positive.

\begin{definition}[Instantaneous actuation]
\label{def:instantaneous_actuation}
Let $y(u)$ be a scalar component or scalar coordinate of
$\Psi_{t,s,\xi}(u)$.  A scalar control $v$ \emph{locally actuates} $y$ at
$u_0$ if $y$ is differentiable there and
$\partial y/\partial v\vert_{u_0}\neq 0$.  It \emph{exactly reaches} a set
$\mathcal{Y}$ on a control domain $\mathcal{V}$ if
$y(\mathcal{V})=\mathcal{Y}$; it is \emph{exactly invertible} if the restricted
map $y:\mathcal{V}\to\mathcal{Y}$ is one-to-one as well.
\end{definition}

Definition~\ref{def:instantaneous_actuation} concerns the one-step map with
the state and stochastic input frozen.  It is distinct from controllability of
the nonlinear, stochastic, multi-step state recursion in
\eqref{eq:controlled_training}.

\subsection{Global step size controls progress but not normalized allocation}
\label{subsec:global_scale}

\begin{theorem}[Invariance under nonzero global scaling]
\label{thm:global_scale_invariance}
Fix $(t,s,\xi)$ and all controls except a scalar step size $\eta$.  Suppose
that the update has the form
\begin{equation}
    Q_t(s,\xi,\eta)=-\eta q,
    \label{eq:global_scaled_update}
\end{equation}
where $q\in\mathbb{R}^{P}$ is independent of $\eta$ and
$D(q)>0$.  Then, for every $\eta\neq 0$,
\begin{equation}
    a_k(-\eta q)=a_k(q),
    \qquad k=1,\ldots,K,
    \label{eq:scale_invariance}
\end{equation}
whereas
\begin{equation}
    \lVert Q_t(s,\xi,\eta)\rVert_2
    =|\eta|\lVert q\rVert_2.
    \label{eq:progress_scaling}
\end{equation}
Thus a nonzero global step size changes update magnitude but does not actuate
the normalized instantaneous allocation.
\end{theorem}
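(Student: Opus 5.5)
The argument is a direct computation from the definitions in \eqref{eq:allocation_denominator} and \eqref{eq:operational_allocation}, using only linearity of the channel maps $C_k$ and absolute homogeneity of the Euclidean norm. We first handle the channel energies, then the denominator, and then take the ratio. The magnitude statement \eqref{eq:progress_scaling} follows from homogeneity of $\lVert\cdot\rVert_2$ on $\mathbb{R}^P$.

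\emph{Step 1: energies scale quadratically.} For each $k$ and every scalar $\eta$, linearity of $C_k$ gives $C_k(-\eta q)=-\eta\,C_k q$. Homogeneity of the norm on $\mathcal{H}_k$ then gives
\begin{equation*}
E_k(-\eta q)=\lVert -\eta\,C_kq\rVert_2^2=\eta^2E_k(q).
\end{equation*}
Summing over $k$ yields $D(-\eta q)=\eta^2D(q)$.

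\emph{Step 2: the allocation is well defined and invariant.} Since $D(q)>0$ by hypothesis and $\eta\neq0$, we have $\eta^2>0$. Hence $D(-\eta q)>0$, so $a(-\eta q)$ is defined by \eqref{eq:operational_allocation}; this is exactly where the assumption $\eta\neq0$ is needed. The common factor $\eta^2$ then cancels in $E_k(-\eta q)/D(-\eta q)$, which gives \eqref{eq:scale_invariance} for every $k$.

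\emph{Step 3: magnitude and interpretation.} By \eqref{eq:global_scaled_update}, $\lVert Q_t(s,\xi,\eta)\rVert_2=|\eta|\,\lVert q\rVert_2$, which is \eqref{eq:progress_scaling}. Because $D(q)>0$ forces $q\neq0$, this magnitude is strictly monotone in $|\eta|$. For the final sentence of the theorem, the frozen-state response $\Psi_{t,s,\xi}(\eta)=a(q)$ is constant on $\eta>0$ and on $\eta<0$. Its derivative with respect to $\eta$ is therefore zero at every $\eta_0\neq0$, so by Definition~\ref{def:instantaneous_actuation} the step size does not locally actuate any component of the allocation.

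No step is a genuine obstacle. The only point requiring care is keeping the nondegeneracy $D(-\eta q)>0$ explicit: the statement must exclude $\eta=0$, where the allocation is undefined rather than invariant.
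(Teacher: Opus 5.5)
Your proposal is correct and follows the paper's proof essentially step for step: linearity of $C_k$ gives $E_k(-\eta q)=\eta^2E_k(q)$, then $D(-\eta q)=\eta^2D(q)>0$ for $\eta\neq0$, the factor $\eta^2$ cancels in the ratio, and absolute homogeneity of the norm gives the magnitude identity. Your extra check via Definition~\ref{def:instantaneous_actuation} (zero derivative at every $\eta_0\neq0$) makes the theorem's closing sentence precise, which the paper's proof leaves implicit.
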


\begin{proof}
Linearity of every $C_k$ gives
\begin{equation}
    E_k(-\eta q)
    =\lVert -\eta C_kq\rVert_2^2
    =\eta^2 E_k(q).
\end{equation}
Consequently $D(-\eta q)=\eta^2D(q)>0$ for $\eta\neq0$, and the common factor
$\eta^2$ cancels from the ratio in
\eqref{eq:operational_allocation}.  Equation
\eqref{eq:progress_scaling} follows from absolute homogeneity of the Euclidean
norm.
\end{proof}

\begin{remark}[Boundary of the invariance statement]
At $\eta=0$, all channel energies vanish and the normalized allocation is
undefined.  Moreover, Theorem~\ref{thm:global_scale_invariance} is a
frozen-state, one-step statement.  Changing $\eta$ generally changes
$s_{t+1}$, so later gradients, moment estimates, and allocations need not
remain invariant.
\end{remark}

\subsection{Coordinate preconditioning as an exact allocation actuator}
\label{subsec:coordinate_allocation}

Consider the coordinatewise rule
\begin{equation}
    \Delta\theta_i(p_c)
    =-\eta\frac{m_i}{(v_i+\varepsilon)^{p_c}},
    \qquad
    v_i\geq0,\quad \varepsilon>0,\quad \eta\neq0.
    \label{eq:coordinate_rule}
\end{equation}
All quantities except $p_c$ are frozen.  Let
\begin{equation}
    \mathcal{I}:=\{i:m_i\neq0\},
    \qquad
    r_i:=v_i+\varepsilon>0,
    \qquad
    \ell_i:=\log r_i.
    \label{eq:active_coordinates}
\end{equation}
Assume $\mathcal{I}\neq\varnothing$.  The coordinate energy allocation is
\begin{equation}
    a_i(p_c)
    :=\frac{m_i^2r_i^{-2p_c}}
    {\sum_{j\in\mathcal{I}}m_j^2r_j^{-2p_c}},
    \qquad i\in\mathcal{I}.
    \label{eq:coordinate_share}
\end{equation}

\begin{theorem}[Affine log-odds control law]
\label{thm:coordinate_log_odds}
For any $i,j\in\mathcal{I}$,
\begin{equation}
    \log\frac{a_i(p_c)}{a_j(p_c)}
    =\log\frac{m_i^2}{m_j^2}
    +2p_c\log\frac{r_j}{r_i},
    \label{eq:coordinate_log_odds}
\end{equation}
and hence
\begin{equation}
    \frac{\partial}{\partial p_c}
    \log\frac{a_i(p_c)}{a_j(p_c)}
    =2\log\frac{r_j}{r_i}.
    \label{eq:coordinate_log_odds_derivative}
\end{equation}
Equivalently, the amplitude ratio obeys
\begin{equation}
    \log\frac{|\Delta\theta_i(p_c)|}
    {|\Delta\theta_j(p_c)|}
    =\log\frac{|m_i|}{|m_j|}
    +p_c\log\frac{r_j}{r_i}.
    \label{eq:coordinate_amplitude_ratio}
\end{equation}
Therefore $p_c$ locally actuates the pairwise allocation log-odds if and only
if $v_i\neq v_j$.
\end{theorem}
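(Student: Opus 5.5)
The plan is to compute the ratio $a_i(p_c)/a_j(p_c)$ directly from \eqref{eq:coordinate_share}. The common denominator $\sum_{k\in\mathcal{I}}m_k^2r_k^{-2p_c}$ cancels, so only the ratio of numerators remains. Because $i,j\in\mathcal{I}$ and $r_i,r_j>0$, both numerators are strictly positive, and the denominator is positive for every real $p_c$. This gives
\begin{equation*}
    \frac{a_i(p_c)}{a_j(p_c)}=\frac{m_i^2}{m_j^2}\Bigl(\frac{r_j}{r_i}\Bigr)^{2p_c}.
\end{equation*}
Taking logarithms, and using $\log r^{-2p_c}=-2p_c\ell$ with the notation of \eqref{eq:active_coordinates}, yields \eqref{eq:coordinate_log_odds}. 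The right-hand side is affine in $p_c$, so differentiating gives the constant slope in \eqref{eq:coordinate_log_odds_derivative}.

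For the amplitude form \eqref{eq:coordinate_amplitude_ratio}, I will work from \eqref{eq:coordinate_rule}:
\begin{equation*}
    |\Delta\theta_i(p_c)|=|\eta|\,|m_i|\,r_i^{-p_c},
\end{equation*}
which is nonzero because $\eta\neq0$ and $m_i\neq0$. Dividing the $i$ and $j$ expressions, the factor $|\eta|$ cancels, and taking logarithms gives the amplitude identity. To justify the word ``equivalently'', I will also note that here the channels are the individual coordinates, so $a_i$ is proportional to $|\Delta\theta_i|^2$. Hence the log-odds equal twice the log amplitude ratio, which is consistent with Theorem~\ref{thm:global_scale_invariance}, since the factor $\eta$ drops out.

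The final claim is the only step that needs any care. Since the log-odds are differentiable in $p_c$ with the constant derivative $2\log(r_j/r_i)$, Definition~\ref{def:instantaneous_actuation} says $p_c$ locally actuates them exactly when this constant is nonzero. The condition $2\log(r_j/r_i)\neq0$ holds iff $r_i\neq r_j$. Because $r_k=v_k+\varepsilon$ with a common $\varepsilon$, this is iff $v_i\neq v_j$. In the degenerate case $v_i=v_j$ the log-odds are identically $\log(m_i^2/m_j^2)$, so the derivative vanishes at every $p_c$ and the ``only if'' direction holds.

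I expect no real obstacle. The only things to watch are the positivity conditions that make the logarithms well-defined, namely $m_i\neq0$, $\varepsilon>0$ and $\eta\neq0$, and the fact that the actuation in question is of the log-odds scalar, which is a function of $\Psi$.
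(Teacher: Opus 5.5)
Your proposal is correct and follows the paper's proof essentially step for step: you cancel the common denominator in \eqref{eq:coordinate_share}, take logarithms and differentiate, repeat the computation directly on \eqref{eq:coordinate_rule} for the amplitude form, and conclude via Definition~\ref{def:instantaneous_actuation} using $r_i=r_j\iff v_i=v_j$. Your extra remarks go slightly beyond the paper's proof but are correct: the logged quantities are positive, and the log-odds equal twice the log amplitude ratio, which justifies the word ``equivalently''.
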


\begin{proof}
The normalizing denominator in \eqref{eq:coordinate_share} cancels in the
ratio, giving
\begin{equation}
    \frac{a_i(p_c)}{a_j(p_c)}
    =\frac{m_i^2}{m_j^2}
     \left(\frac{r_j}{r_i}\right)^{2p_c}.
\end{equation}
Taking logarithms proves \eqref{eq:coordinate_log_odds}; differentiation
proves \eqref{eq:coordinate_log_odds_derivative}.  Applying the same argument
directly to \eqref{eq:coordinate_rule} proves
\eqref{eq:coordinate_amplitude_ratio}.  Since $r_i=r_j$ exactly when
$v_i=v_j$, the final claim follows from
Definition~\ref{def:instantaneous_actuation}.
\end{proof}

\begin{corollary}[Exact pairwise reachability]
\label{cor:coordinate_reachability}
If $v_i\neq v_j$ and $p_c\in\mathbb{R}$, then for every target
$y^\star\in\mathbb{R}$ there exists a unique control
\begin{equation}
    p_c^\star
    =\frac{y^\star-\log(m_i^2/m_j^2)}
    {2\log(r_j/r_i)}
    \label{eq:coordinate_inverse}
\end{equation}
such that
$\log(a_i(p_c^\star)/a_j(p_c^\star))=y^\star$.  If $p_c$ is restricted to an
interval $[p_-,p_+]$, the reachable set is exactly the closed interval whose
endpoints are the values of \eqref{eq:coordinate_log_odds} at $p_-$ and
$p_+$.
\end{corollary}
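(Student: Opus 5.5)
The plan is to reduce everything to Theorem~\ref{thm:coordinate_log_odds}, which writes the pairwise log-odds as an affine function of the control:
\begin{equation*}
    y(p_c):=\log\frac{a_i(p_c)}{a_j(p_c)}=c_0+c_1p_c,
    \qquad
    c_0:=\log\frac{m_i^2}{m_j^2},\quad c_1:=2\log\frac{r_j}{r_i}.
\end{equation*}
Both coefficients are finite. Indeed, $i,j\in\mathcal{I}$ gives $m_i,m_j\neq0$, and $r_i,r_j\geq\varepsilon>0$. The hypothesis $v_i\neq v_j$ gives $r_i\neq r_j$. Since $\log$ is injective on $(0,\infty)$, this forces $c_1\neq0$. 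Everything else is the elementary theory of a nonconstant affine map $\mathbb{R}\to\mathbb{R}$.

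For the first claim, fix $y^\star\in\mathbb{R}$ and solve $c_0+c_1p_c=y^\star$. Because $c_1\neq0$, the equation has exactly one solution, $p_c^\star=(y^\star-c_0)/c_1$, which is \eqref{eq:coordinate_inverse}. Existence follows by substituting back into \eqref{eq:coordinate_log_odds}. Uniqueness follows from injectivity: $c_0+c_1p=c_0+c_1p'$ implies $p=p'$. In the language of Definition~\ref{def:instantaneous_actuation}, $y$ exactly reaches $\mathbb{R}$ on $\mathcal{V}=\mathbb{R}$ and is exactly invertible there.

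For the restricted claim, note that $y$ is continuous and strictly monotone on $[p_-,p_+]$. It is increasing if $c_1>0$, i.e.\ $r_j>r_i$, and decreasing if $c_1<0$. The intermediate value theorem shows that every value between $y(p_-)$ and $y(p_+)$ is attained. Monotonicity shows that no value outside that closed interval is attained. Hence the reachable set is exactly the closed interval with endpoints $y(p_-)$ and $y(p_+)$, ordered according to the sign of $c_1$. One could equally invoke the explicit inverse: $y^\star$ lies in the interval if and only if $p_c^\star\in[p_-,p_+]$.

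None of these steps is a real obstacle. The only point needing care is well-definedness. We must check that $a_i,a_j>0$, so the log-odds are finite; this holds because the terms $m_k^2r_k^{-2p_c}$ with $k\in\mathcal{I}$ are positive. We must also record the degenerate case $p_-=p_+$, where the interval collapses to a single point, consistent with the statement.
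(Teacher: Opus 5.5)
Your proposal is correct and follows essentially the same route as the paper. Both use the affine law of Theorem~\ref{thm:coordinate_log_odds} with nonzero slope $2\log(r_j/r_i)$ to get a bijection of $\mathbb{R}$ with the stated inverse, then take the image of $[p_-,p_+]$ under a nonconstant affine map. Your extra well-definedness checks and the intermediate-value argument simply spell out steps the paper leaves implicit.
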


\begin{proof}
Under $v_i\neq v_j$, the right-hand side of
\eqref{eq:coordinate_log_odds} is an affine function of $p_c$ with nonzero
slope.  It is therefore a bijection from $\mathbb{R}$ to $\mathbb{R}$, with
inverse \eqref{eq:coordinate_inverse}.  The image of a closed interval under a
nonconstant affine function is the interval determined by its endpoint
values.
\end{proof}

The pairwise law extends to the entire coordinate distribution.

\begin{proposition}[Replicator identity and monotone moment shift]
\label{prop:replicator_identity}
Let
\begin{equation}
    \bar\ell(p_c):=\sum_{i\in\mathcal{I}}a_i(p_c)\ell_i.
    \label{eq:mean_log_moment}
\end{equation}
Then
\begin{align}
    \frac{d a_i}{dp_c}
    &=-2a_i\bigl(\ell_i-\bar\ell\bigr),
    \label{eq:replicator}\\
    \frac{d\bar\ell}{dp_c}
    &=-2\operatorname{Var}_{a(p_c)}(\ell)
    \leq0.
    \label{eq:monotone_log_moment}
\end{align}
The inequality in \eqref{eq:monotone_log_moment} is strict exactly when the
$\ell_i$ are not all equal on $\mathcal{I}$.
\end{proposition}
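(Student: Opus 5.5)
The plan is to view the coordinate allocation \eqref{eq:coordinate_share} as a Gibbs (softmax) distribution in the log-moments and differentiate it directly. Write the unnormalized weights as
\begin{equation*}
    w_i(p_c):=m_i^2 r_i^{-2p_c}=m_i^2\exp(-2p_c\ell_i),
    \qquad
    Z(p_c):=\sum_{j\in\mathcal{I}}w_j(p_c),
\end{equation*}
so that $a_i=w_i/Z$. Every $w_i$ is strictly positive and smooth in $p_c$, because $m_i\neq0$ on $\mathcal{I}$ and $r_i>0$. Hence $Z>0$, and each $a_i$ is smooth on all of $\mathbb{R}$.

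The first step is the replicator identity \eqref{eq:replicator}. I will differentiate $\log a_i=\log w_i-\log Z$. From the definition, $d\log w_i/dp_c=-2\ell_i$. For the normalizer,
\begin{equation*}
    \frac{d\log Z}{dp_c}=\frac{1}{Z}\sum_j(-2\ell_j)w_j=-2\bar\ell .
\end{equation*}
Subtracting gives $d\log a_i/dp_c=-2(\ell_i-\bar\ell)$, and multiplying by $a_i>0$ yields \eqref{eq:replicator}. This is consistent with Theorem~\ref{thm:coordinate_log_odds}: the pairwise log-odds derivative is $-2(\ell_i-\ell_j)=2\log(r_j/r_i)$, because the $\bar\ell$ terms cancel.

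The second step is \eqref{eq:monotone_log_moment}. Since the $\ell_i$ are frozen constants, $d\bar\ell/dp_c=\sum_i \ell_i\,da_i/dp_c$. Substituting \eqref{eq:replicator} gives
\begin{equation*}
    \frac{d\bar\ell}{dp_c}=-2\sum_i a_i\ell_i(\ell_i-\bar\ell)=-2\sum_i a_i(\ell_i-\bar\ell)^2 .
\end{equation*}
The last equality uses $\sum_i a_i(\ell_i-\bar\ell)=0$, which holds because $\sum_i a_i=1$. The right-hand side is $-2\operatorname{Var}_{a(p_c)}(\ell)\le0$.

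The only point needing care is the strictness claim. It relies on all $a_i$ being strictly positive, which follows from restricting to the active set $\mathcal{I}$. Given that, the variance vanishes iff $\ell_i=\bar\ell$ for every $i\in\mathcal{I}$, i.e.\ iff the $\ell_i$ are all equal on $\mathcal{I}$. Since $\ell_i=\log(v_i+\varepsilon)$, this is the same as all active $v_i$ being equal. I do not expect a genuine obstacle; the argument is a routine softmax-derivative computation.
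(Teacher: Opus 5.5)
Your proposal is correct and takes essentially the same route as the paper. Both write $a_i=e_i/Z$ with $e_i=m_i^2e^{-2p_c\ell_i}$ and use logarithmic differentiation with $Z'/Z=-2\bar\ell$; both then multiply by $\ell_i$ and sum to get $-2\operatorname{Var}_{a(p_c)}(\ell)$, with strictness from $a_i>0$ on $\mathcal{I}$ (you write the variance in centered form $\sum_i a_i(\ell_i-\bar\ell)^2$ instead of $\sum_i a_i\ell_i^2-\bar\ell^2$, which is only cosmetic).
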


\begin{proof}
Write $e_i(p_c)=m_i^2e^{-2p_c\ell_i}$ and
$Z(p_c)=\sum_{j\in\mathcal{I}}e_j(p_c)$, so that $a_i=e_i/Z$.  Since
$e_i'=-2\ell_i e_i$ and
\begin{equation}
    \frac{Z'}{Z}
    =-2\sum_{j\in\mathcal{I}}a_j\ell_j
    =-2\bar\ell,
\end{equation}
logarithmic differentiation yields
\eqref{eq:replicator}.  Multiplying \eqref{eq:replicator} by $\ell_i$ and
summing gives
\begin{equation}
    \bar\ell'
    =-2\left(\sum_i a_i\ell_i^2-\bar\ell^2\right)
    =-2\operatorname{Var}_{a(p_c)}(\ell).
\end{equation}
All $a_i$ are positive on $\mathcal{I}$, so the variance vanishes exactly when
all active $\ell_i$ are equal.
\end{proof}

Proposition~\ref{prop:replicator_identity} states only that increasing $p_c$
shifts update energy toward coordinates with smaller $v_i+\varepsilon$.  It
does not assert that those coordinates are more predictive, more causal, or
better for generalization.

\subsection{Affine spectral allocation and rank-one control}
\label{subsec:affine_spectral_allocation}

Let $M_W\in\mathbb{R}^{m\times d}$ be a weight-update statistic and
$m_b\in\mathbb{R}^{m}$ the corresponding bias statistic.  For a width
$d>1$, define
\begin{equation}
    \alpha(p_\alpha):=d^{p_\alpha}>0,
    \qquad
    A_{p_\alpha}:=\bigl[M_W,\,\alpha(p_\alpha)m_b\bigr],
    \label{eq:affine_matrix}
\end{equation}
and its left Gram matrix
\begin{equation}
    H_{p_\alpha}:=A_{p_\alpha}A_{p_\alpha}^{\mathsf T}
    =M_WM_W^{\mathsf T}+d^{2p_\alpha}m_bm_b^{\mathsf T}.
    \label{eq:affine_gram}
\end{equation}

\begin{theorem}[Rank-one positive-semidefinite actuation]
\label{thm:rank_one_psd}
The affine control satisfies
\begin{equation}
    \frac{\partial H_{p_\alpha}}{\partial p_\alpha}
    =2(\log d)d^{2p_\alpha}m_bm_b^{\mathsf T}
    \succeq0.
    \label{eq:gram_derivative}
\end{equation}
If $m_b\neq0$, the derivative has rank one.  More precisely, for every
$x\in\mathbb{R}^{m}$,
\begin{equation}
    \frac{\partial}{\partial p_\alpha}
    \bigl(x^{\mathsf T}H_{p_\alpha}x\bigr)
    =2(\log d)d^{2p_\alpha}(x^{\mathsf T}m_b)^2,
    \label{eq:quadratic_response}
\end{equation}
which is positive exactly when $x^{\mathsf T}m_b\neq0$.
\end{theorem}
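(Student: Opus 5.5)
The plan is a direct computation from the Gram expansion \eqref{eq:affine_gram}. Only the second summand $d^{2p_\alpha}m_bm_b^{\mathsf T}$ depends on $p_\alpha$, because $M_W$ and $m_b$ are frozen. Writing $d^{2p_\alpha}=e^{2p_\alpha\log d}$ and differentiating entrywise gives $\partial_{p_\alpha}d^{2p_\alpha}=2(\log d)d^{2p_\alpha}$. This yields \eqref{eq:gram_derivative} immediately, since differentiation of a matrix-valued function is entrywise and commutes with multiplication by the constant matrix $m_bm_b^{\mathsf T}$.

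For the semidefiniteness and rank claims, the standing assumption $d>1$ from \eqref{eq:affine_matrix} gives $\log d>0$. Together with $d^{2p_\alpha}>0$, the scalar prefactor $c:=2(\log d)d^{2p_\alpha}$ is strictly positive. The outer product $m_bm_b^{\mathsf T}$ is positive semidefinite, because $x^{\mathsf T}m_bm_b^{\mathsf T}x=(x^{\mathsf T}m_b)^2\ge0$. A positive multiple of a PSD matrix is PSD. When $m_b\neq0$, the matrix $m_bm_b^{\mathsf T}$ has range equal to $\operatorname{span}\{m_b\}$, so it has rank one. Multiplying by $c\neq0$ preserves the rank.

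For \eqref{eq:quadratic_response}, I would fix $x$ and observe that $x^{\mathsf T}H_{p_\alpha}x$ is linear in $H_{p_\alpha}$. Differentiation therefore passes inside the bilinear form, and \eqref{eq:gram_derivative} gives $c\,(x^{\mathsf T}m_b)^2$. Because $c>0$, this quantity is positive if and only if $x^{\mathsf T}m_b\neq0$, which proves the final claim.

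There is no genuine obstacle here. The only point requiring care is keeping the hypothesis $d>1$ explicit, since positivity of $\log d$ is what gives the sign and the ``exactly when'' statement. If $d<1$, the sign would flip, and if $d=1$ the derivative would vanish; the proof should note that the standing assumption excludes both cases.
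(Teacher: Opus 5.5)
Your proposal is correct and follows essentially the same direct computation as the paper: differentiate the Gram expansion using $\frac{d}{dp_\alpha}d^{2p_\alpha}=2(\log d)d^{2p_\alpha}$, then use $x^{\mathsf T}m_bm_b^{\mathsf T}x=(x^{\mathsf T}m_b)^2$ for positive semidefiniteness, rank one, and the quadratic-form identity. Your explicit use of the standing assumption $d>1$ to get a strictly positive prefactor $2(\log d)d^{2p_\alpha}$ is a point the paper's proof leaves implicit. That positivity is nonetheless needed for the semidefiniteness, the rank-one claim, and the ``exactly when'' claim.
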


\begin{proof}
Differentiate \eqref{eq:affine_gram} and use
$\frac{d}{dp_\alpha}d^{2p_\alpha}=2(\log d)d^{2p_\alpha}$ to obtain
\eqref{eq:gram_derivative}.  An outer product $m_bm_b^{\mathsf T}$ is
positive semidefinite, because
$x^{\mathsf T}m_bm_b^{\mathsf T}x=(x^{\mathsf T}m_b)^2\geq0$; it has rank one
when $m_b\neq0$.  The same quadratic-form identity proves
\eqref{eq:quadratic_response}.
\end{proof}

\begin{corollary}[Monotone eigenvalue response]
\label{cor:eigenvalue_response}
If $p_2\geq p_1$, then
$H_{p_2}\succeq H_{p_1}$ and every ordered eigenvalue satisfies
\begin{equation}
    \lambda_i(H_{p_2})\geq\lambda_i(H_{p_1}).
    \label{eq:eigenvalue_monotonicity}
\end{equation}
If $\lambda_i(H_{p_\alpha})$ is simple with unit eigenvector $u_i$, then
\begin{equation}
    \frac{d\lambda_i(H_{p_\alpha})}{dp_\alpha}
    =2(\log d)d^{2p_\alpha}(u_i^{\mathsf T}m_b)^2.
    \label{eq:eigenvalue_derivative}
\end{equation}
\end{corollary}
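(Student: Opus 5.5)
Two facts are needed: Loewner monotonicity of $H_{p_\alpha}$, and the Hellmann--Feynman formula for a simple eigenvalue. For the first, subtract \eqref{eq:affine_gram} at the two control values:
\begin{equation*}
    H_{p_2}-H_{p_1}=\bigl(d^{2p_2}-d^{2p_1}\bigr)m_bm_b^{\mathsf T}.
\end{equation*}
Because $d>1$, the map $p\mapsto d^{2p}$ is nondecreasing. The scalar coefficient is therefore nonnegative whenever $p_2\geq p_1$. The matrix $m_bm_b^{\mathsf T}$ is positive semidefinite by the quadratic-form identity already used in Theorem~\ref{thm:rank_one_psd}, so $H_{p_2}\succeq H_{p_1}$. (Alternatively, integrate \eqref{eq:gram_derivative} over $[p_1,p_2]$; the direct difference is simpler.)

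For the ordered eigenvalue inequality \eqref{eq:eigenvalue_monotonicity}, I would invoke the Courant--Fischer min--max characterization. Order the eigenvalues decreasingly, say; the argument is identical for increasing order. Then
\[
\lambda_i(H)=\max_{\dim S=i}\ \min_{x\in S,\ \|x\|=1}x^{\mathsf T}Hx .
\]
Since $x^{\mathsf T}H_{p_2}x\geq x^{\mathsf T}H_{p_1}x$ for every $x$, the inner minimum over each fixed subspace $S$ can only increase. Hence the outer maximum also increases. This is Weyl's monotonicity theorem, and it would simply be cited. No genericity assumption is needed here, since min--max handles repeated eigenvalues.

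For the derivative formula \eqref{eq:eigenvalue_derivative}, note that $p_\alpha\mapsto H_{p_\alpha}$ is real-analytic and symmetric. A simple eigenvalue of a symmetric matrix is isolated, so by standard perturbation theory (the implicit function theorem applied to $(H-\lambda I)u=0$, $u^{\mathsf T}u=1$) both $\lambda_i$ and a unit eigenvector $u_i$ are differentiable near the given $p_\alpha$.

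Differentiate $H u_i=\lambda_i u_i$ and left-multiply by $u_i^{\mathsf T}$. The terms involving $u_i'$ cancel: by symmetry, $u_i^{\mathsf T}Hu_i'=\lambda_i u_i^{\mathsf T}u_i'$. Also $u_i^{\mathsf T}u_i=1$, so $u_i^{\mathsf T}u_i'=0$. What remains is $\lambda_i'=u_i^{\mathsf T}H'u_i$. Substituting \eqref{eq:quadratic_response} with $x=u_i$ gives $2(\log d)d^{2p_\alpha}(u_i^{\mathsf T}m_b)^2$.

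The only delicate point is justifying differentiability of $\lambda_i$ and of $u_i$. This is exactly where simplicity is used, and I would cite the analytic perturbation result (Kato) rather than reprove it. Everything else is a direct consequence of Theorem~\ref{thm:rank_one_psd}.
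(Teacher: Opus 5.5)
Your proposal is correct and follows the paper's proof essentially step for step. You use the same three ingredients: the difference $H_{p_2}-H_{p_1}=(d^{2p_2}-d^{2p_1})m_bm_b^{\mathsf T}\succeq0$, Courant--Fischer for the ordered eigenvalues, and the Hellmann--Feynman identity $\lambda_i'=u_i^{\mathsf T}H'u_i$ combined with the rank-one derivative; your justification of differentiability from simplicity and your explicit cancellation of the $u_i'$ terms only fill in details the paper leaves implicit.
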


\begin{proof}
From \eqref{eq:affine_gram},
\begin{equation}
    H_{p_2}-H_{p_1}
    =\bigl(d^{2p_2}-d^{2p_1}\bigr)m_bm_b^{\mathsf T}\succeq0.
\end{equation}
The eigenvalue inequality follows from the Courant--Fischer min--max theorem.
For a simple eigenvalue, the Hellmann--Feynman identity gives
$\lambda_i'=u_i^{\mathsf T}H'u_i$; substituting
\eqref{eq:gram_derivative} proves \eqref{eq:eigenvalue_derivative}.
\end{proof}

The raw contribution of the affine column has an exact scalar control law.

\begin{proposition}[Logistic law for raw affine participation]
\label{prop:raw_affine_participation}
Assume $\lVert M_W\rVert_F>0$ and $\lVert m_b\rVert_2>0$.  Define
\begin{equation}
    \rho(p_\alpha)
    :=\frac{d^{2p_\alpha}\lVert m_b\rVert_2^2}
    {\lVert M_W\rVert_F^2+d^{2p_\alpha}\lVert m_b\rVert_2^2}.
    \label{eq:raw_bias_share}
\end{equation}
Then $\rho:\mathbb{R}\to(0,1)$ is strictly increasing and
\begin{align}
    \log\frac{\rho(p_\alpha)}{1-\rho(p_\alpha)}
    &=\log\frac{\lVert m_b\rVert_2^2}{\lVert M_W\rVert_F^2}
      +2p_\alpha\log d,
    \label{eq:raw_bias_logit}\\
    \frac{d\rho}{dp_\alpha}
    &=2(\log d)\rho(1-\rho)>0.
    \label{eq:raw_bias_derivative}
\end{align}
For any target $\rho^\star\in(0,1)$, the unique inverse is
\begin{equation}
    p_\alpha^\star
    =\frac{1}{2\log d}
      \left[
      \log\frac{\rho^\star}{1-\rho^\star}
      -\log\frac{\lVert m_b\rVert_2^2}{\lVert M_W\rVert_F^2}
      \right].
    \label{eq:raw_bias_inverse}
\end{equation}
\end{proposition}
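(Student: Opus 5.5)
The plan is to reduce everything to a two-term logistic parameterization, in the same spirit as the proof of Theorem~\ref{thm:coordinate_log_odds}. Write $W:=\lVert M_W\rVert_F^2>0$ and $B:=\lVert m_b\rVert_2^2>0$, and set
\begin{equation*}
    z(p_\alpha):=\log\frac{B}{W}+2p_\alpha\log d.
\end{equation*}
Dividing numerator and denominator of \eqref{eq:raw_bias_share} by $W$ gives $\rho=e^{z}/(1+e^{z})=\sigma(z)$, where $\sigma$ is the standard logistic function. Since $W>0$ and $d^{2p_\alpha}B>0$, both terms of the denominator are strictly positive. Hence $0<\rho<1$ for every real $p_\alpha$, which shows that $\rho$ maps $\mathbb{R}$ into $(0,1)$. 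Note that the energy bookkeeping uses $\lVert A_{p_\alpha}\rVert_F^2=\operatorname{tr}H_{p_\alpha}=W+d^{2p_\alpha}B$, read off from \eqref{eq:affine_gram}. This identifies $\rho$ as the bias-column share of the augmented matrix, although the algebra below does not need that interpretation.

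The key steps proceed in order.

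\emph{Logit law.} Compute $1-\rho=W/(W+d^{2p_\alpha}B)$. The ratio $\rho/(1-\rho)$ then equals $d^{2p_\alpha}B/W$, because the common denominator cancels as in Theorem~\ref{thm:global_scale_invariance}. Taking logarithms yields \eqref{eq:raw_bias_logit}.

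\emph{Derivative.} Differentiate $\log\rho-\log(1-\rho)$ to get $\rho'/(\rho(1-\rho))=2\log d$. Alternatively, use $\sigma'=\sigma(1-\sigma)$ together with the chain rule. This gives \eqref{eq:raw_bias_derivative}. Positivity follows from $d>1$, which gives $\log d>0$, and from $\rho(1-\rho)>0$ by the first step. Strict monotonicity then follows from the mean value theorem.

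\emph{Surjectivity and inverse.} The logit is an affine function of $p_\alpha$ with nonzero slope $2\log d$. It is therefore a bijection $\mathbb{R}\to\mathbb{R}$, exactly as in Corollary~\ref{cor:coordinate_reachability}. The logit map $\rho\mapsto\log(\rho/(1-\rho))$ is itself a bijection $(0,1)\to\mathbb{R}$. Composing the two bijections shows that $\rho$ is onto $(0,1)$ and one-to-one. Solving the affine equation for $p_\alpha$ gives \eqref{eq:raw_bias_inverse}. In the language of Definition~\ref{def:instantaneous_actuation}, this shows that $p_\alpha$ exactly reaches and exactly inverts $(0,1)$.

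I expect no step to be a genuine obstacle. The only care needed is bookkeeping of hypotheses. Both $W>0$ and $B>0$ are required: if $B=0$ then $\rho\equiv0$, and if $W=0$ then $\rho\equiv1$, so the logit is undefined in either case. The condition $d>1$, rather than merely $d\neq1$, is what makes the sign of the derivative positive. It also guarantees that the division by $2\log d$ in \eqref{eq:raw_bias_inverse} is legitimate.
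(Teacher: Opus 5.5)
Your proposal is correct and follows essentially the same route as the paper: form $\rho/(1-\rho)=d^{2p_\alpha}\lVert m_b\rVert_2^2/\lVert M_W\rVert_F^2$, take logarithms, differentiate the logistic representation, and invert the affine logit using $d>1$. Your extra bookkeeping fills in steps that the paper's short proof leaves implicit: you check $0<\rho<1$ explicitly, obtain strict monotonicity from the positive derivative, and compose the two bijections to get surjectivity onto $(0,1)$.
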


\begin{proof}
Dividing \eqref{eq:raw_bias_share} by its complement gives
\begin{equation}
    \frac{\rho}{1-\rho}
    =d^{2p_\alpha}
      \frac{\lVert m_b\rVert_2^2}{\lVert M_W\rVert_F^2}.
\end{equation}
Taking logarithms proves \eqref{eq:raw_bias_logit}.  Differentiating the
logistic representation proves \eqref{eq:raw_bias_derivative}.  Because
$d>1$, the logit is an affine bijection from $\mathbb{R}$ to $\mathbb{R}$;
solving it for $p_\alpha$ proves \eqref{eq:raw_bias_inverse}.
\end{proof}

\begin{remark}[Raw participation is not a physical update share]
The quantity $\rho$ is the fraction of the squared Frobenius norm of the
\emph{input matrix} $A_{p_\alpha}$ carried by its last column.  A nonlinear
spectral map, global normalization, clipping rule, or output-side rescaling
can change the fraction carried by the final physical bias update.  Therefore
Proposition~\ref{prop:raw_affine_participation} must not be read as a theorem
about the final bias-update norm.
\end{remark}

\subsection{A two-input allocation map}
\label{subsec:unified_control_map}

The preceding control laws can be combined without conflating their targets.
Fix active coordinates $i,j$ with $v_i\neq v_j$, and assume the nondegeneracy
conditions of Proposition~\ref{prop:raw_affine_participation}.  Define
\begin{equation}
    y_c:=\log\frac{a_i}{a_j},
    \qquad
    y_\alpha:=\log\frac{\rho}{1-\rho}.
    \label{eq:two_outputs}
\end{equation}

\begin{theorem}[Exact frozen-state allocation coordinates]
\label{thm:two_input_control}
For independent controls $(p_c,p_\alpha)\in\mathbb{R}^2$,
\begin{equation}
    \begin{bmatrix}y_c\\y_\alpha\end{bmatrix}
    =
    \begin{bmatrix}
      \log(m_i^2/m_j^2)\\
      \log(\lVert m_b\rVert_2^2/\lVert M_W\rVert_F^2)
    \end{bmatrix}
    +
    \begin{bmatrix}
      2\log(r_j/r_i) & 0\\
      0 & 2\log d
    \end{bmatrix}
    \begin{bmatrix}p_c\\p_\alpha\end{bmatrix}.
    \label{eq:two_input_affine_map}
\end{equation}
The Jacobian in \eqref{eq:two_input_affine_map} is nonsingular.  Hence the
frozen-state map is a global bijection from $\mathbb{R}^2$ to
$\mathbb{R}^2$ in these two output coordinates.  On rectangular bounded
control domains, its reachable set is exactly the corresponding affine
rectangle in $(y_c,y_\alpha)$-space.
\end{theorem}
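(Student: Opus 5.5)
The plan is to assemble the statement from the two scalar laws already proved and then use elementary facts about affine maps. For the first row of \eqref{eq:two_input_affine_map}, I will invoke Theorem~\ref{thm:coordinate_log_odds}. Equation \eqref{eq:coordinate_log_odds} gives $y_c$ as $\log(m_i^2/m_j^2)+2p_c\log(r_j/r_i)$. This expression does not involve $p_\alpha$, because the coordinate rule \eqref{eq:coordinate_rule} is frozen apart from $p_c$. For the second row, I will invoke Proposition~\ref{prop:raw_affine_participation}. Equation \eqref{eq:raw_bias_logit} gives $y_\alpha$ as $\log(\lVert m_b\rVert_2^2/\lVert M_W\rVert_F^2)+2p_\alpha\log d$, which does not involve $p_c$ because $\rho$ in \eqref{eq:raw_bias_share} depends only on the affine column scaling. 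Stacking the two identities yields \eqref{eq:two_input_affine_map}, with a constant offset vector $b$ and a diagonal matrix $J$.

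Next comes the Jacobian. Its determinant is $4\log(r_j/r_i)\log d$. The first factor is nonzero because $v_i\neq v_j$ implies $r_i\neq r_j$. The second factor is nonzero because $d>1$, as assumed in \eqref{eq:affine_matrix}. So $J$ is invertible, and the map $p\mapsto b+Jp$ is a global bijection of $\mathbb{R}^2$ with explicit inverse $p=J^{-1}(y-b)$. Componentwise, this inverse is exactly \eqref{eq:coordinate_inverse} and \eqref{eq:raw_bias_inverse}, with the logit $y_\alpha$ in place of $\rho^\star$. I should also note that $y_\alpha$ ranges over all of $\mathbb{R}$, since $\rho\in(0,1)$ and the logit is a bijection onto $\mathbb{R}$.

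Finally, take a rectangular domain $[p_c^-,p_c^+]\times[p_\alpha^-,p_\alpha^+]$. Because $J$ is diagonal, the map acts separately on the two coordinates, so the image is the product of the two one-dimensional images. By Corollary~\ref{cor:coordinate_reachability} and the analogous argument for the affine logit, each one-dimensional image is the closed interval whose endpoints are the values at the interval endpoints. The endpoints come in reversed order if the slope is negative, which happens for $y_c$ when $r_j<r_i$. The product of these intervals is the claimed affine rectangle.

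The only point needing care, and thus the main obstacle, is justifying that the two controls really enter independently. One must check that $a_i,a_j$ do not depend on $p_\alpha$ and $\rho$ does not depend on $p_c$. This is what makes the off-diagonal entries of $J$ zero. I would state it explicitly as a consequence of the frozen-state convention: the moments $m,v$ and the statistics $M_W,m_b$ are held fixed while the controls vary. Without it, the rectangle claim could fail for a non-diagonal $J$, since the image of a rectangle would then be a parallelogram.
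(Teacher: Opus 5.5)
Your proposal is correct and matches the paper's proof essentially step for step. You stack \eqref{eq:coordinate_log_odds} and \eqref{eq:raw_bias_logit}, note that the diagonal Jacobian has determinant $4\log(r_j/r_i)\log d\neq 0$ because $v_i\neq v_j$ and $d>1$, and conclude global bijectivity and the product-of-intervals image. Your explicit remarks on frozen-state independence (zero off-diagonal entries) and on endpoint reversal when $r_j<r_i$ are sound refinements that the paper leaves implicit.
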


\begin{proof}
Equation \eqref{eq:two_input_affine_map} is obtained by stacking
\eqref{eq:coordinate_log_odds} and \eqref{eq:raw_bias_logit}.  Its determinant
is
\begin{equation}
    4\log(r_j/r_i)\log d\neq0
\end{equation}
because $v_i\neq v_j$ and $d>1$.  An affine map with an invertible linear part
is a global bijection.  The image of a Cartesian product of intervals under
this diagonal affine map is the Cartesian product of the two scalar image
intervals.
\end{proof}

Theorem~\ref{thm:two_input_control} is an exact result about two selected
instantaneous allocation coordinates.  It neither implies control of all
parameter groups nor establishes controllability of the trajectory
$s_0,s_1,\ldots$.

\subsection{Why the affine channel is structurally special}
\label{subsec:affine_rank_one_structure}

Consider an affine layer with mini-batch input
$X\in\mathbb{R}^{N\times d}$, weight $W\in\mathbb{R}^{m\times d}$, bias
$b\in\mathbb{R}^{m}$, and pre-activation
\begin{equation}
    Z=XW^{\mathsf T}+\mathbf{1}b^{\mathsf T}.
    \label{eq:affine_layer}
\end{equation}
For an empirical loss $L=N^{-1}\sum_{n=1}^{N}\ell_n$, let
$D\in\mathbb{R}^{N\times m}$ have rows
$D_{n,:}=\nabla_{Z_{n,:}}\ell_n^{\mathsf T}$.  Then
\begin{equation}
    G_W:=\nabla_WL=\frac{1}{N}D^{\mathsf T}X,
    \qquad
    g_b:=\nabla_bL=\frac{1}{N}D^{\mathsf T}\mathbf{1}.
    \label{eq:affine_gradients}
\end{equation}
Define the batch mean and centered input by
\begin{equation}
    \mu:=\frac{1}{N}X^{\mathsf T}\mathbf{1},
    \qquad
    X_c:=X-\mathbf{1}\mu^{\mathsf T},
    \qquad
    G_c:=\frac{1}{N}D^{\mathsf T}X_c.
    \label{eq:centered_input}
\end{equation}

\begin{proposition}[Exact rank-one affine decomposition]
\label{prop:rank_one_gradient}
The weight gradient decomposes as
\begin{equation}
    G_W=G_c+g_b\mu^{\mathsf T}.
    \label{eq:weight_gradient_decomposition}
\end{equation}
Consequently, for every $\alpha>0$,
\begin{equation}
    [G_W,\,\alpha g_b]
    =[G_c,\,0]+g_b[\mu^{\mathsf T},\,\alpha].
    \label{eq:augmented_gradient_decomposition}
\end{equation}
The second term in \eqref{eq:augmented_gradient_decomposition} has rank at
most one, and has rank exactly one if $g_b\neq0$.
\end{proposition}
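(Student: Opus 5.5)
The plan is a direct algebraic verification starting from the definitions in \eqref{eq:affine_gradients} and \eqref{eq:centered_input}; no earlier theorem is needed beyond elementary rank facts. I will first substitute $X=X_c+\mathbf{1}\mu^{\mathsf T}$ into $G_W=N^{-1}D^{\mathsf T}X$. By bilinearity this gives
\begin{equation*}
    G_W=\frac{1}{N}D^{\mathsf T}X_c+\frac{1}{N}D^{\mathsf T}\mathbf{1}\mu^{\mathsf T}
    =G_c+g_b\mu^{\mathsf T},
\end{equation*}
using $g_b=N^{-1}D^{\mathsf T}\mathbf{1}$. This proves \eqref{eq:weight_gradient_decomposition}.

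For \eqref{eq:augmented_gradient_decomposition}, I will write the augmented matrix column-blockwise. Its first $d$ columns are $G_c+g_b\mu^{\mathsf T}$ and its last column is $\alpha g_b=0+g_b\cdot\alpha$. The outer product $g_b[\mu^{\mathsf T},\alpha]$ has first block $g_b\mu^{\mathsf T}$ and last column $\alpha g_b$. Adding $[G_c,0]$ therefore reproduces both blocks, which proves the identity for every $\alpha>0$. In fact the identity holds for any real $\alpha$.

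For the rank claim, the second term is an outer product $uv^{\mathsf T}$ with $u=g_b\in\mathbb{R}^m$ and $v=[\mu^{\mathsf T},\alpha]^{\mathsf T}\in\mathbb{R}^{d+1}$. Its column space is contained in $\operatorname{span}\{g_b\}$, so its rank is at most one. If $g_b\neq0$, then $v\neq0$, because its last entry is $\alpha>0$. This is the one place where the hypothesis $\alpha>0$ (or merely $\alpha\neq0$) is used. Hence $uv^{\mathsf T}\neq0$ and the rank is exactly one.

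There is no genuine obstacle. The only step that needs care is the exact-rank-one claim, which must not rely on $\mu\neq0$: the batch mean may vanish, and it is the appended entry $\alpha$ that guarantees $v\neq0$.
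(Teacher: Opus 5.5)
Your proposal is correct and matches the paper's proof essentially step for step. You substitute $X=X_c+\mathbf{1}\mu^{\mathsf T}$, append the scaled bias column, and get exact rank one because the row vector $[\mu^{\mathsf T},\alpha]$ is nonzero whenever $\alpha>0$, even if $\mu=0$.
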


\begin{proof}
Substitute $X=X_c+\mathbf{1}\mu^{\mathsf T}$ into
\eqref{eq:affine_gradients}:
\begin{align}
    G_W
    &=\frac{1}{N}D^{\mathsf T}X_c
      +\frac{1}{N}D^{\mathsf T}\mathbf{1}\mu^{\mathsf T}\\
    &=G_c+g_b\mu^{\mathsf T}.
\end{align}
Appending the scaled bias column yields
\eqref{eq:augmented_gradient_decomposition}.  Its second term is an outer
product of $g_b$ and the nonzero row vector $[\mu^{\mathsf T},\alpha]$, so its
rank is zero when $g_b=0$ and one otherwise.
\end{proof}

The same factorization does not automatically survive temporal averaging when
mini-batch means vary.  The exact replacement is as follows.

\begin{proposition}[Momentum decomposition with a mean-drift remainder]
\label{prop:momentum_remainder}
Let $s$ index a finite history, let $c_s\in\mathbb{R}$ be scalar aggregation
coefficients, and write the per-step decomposition as
\begin{equation}
    G_{W,s}=G_{c,s}+g_{b,s}\mu_s^{\mathsf T}.
\end{equation}
Define
\begin{equation}
    M_W:=\sum_s c_sG_{W,s},
    \qquad
    M_c:=\sum_s c_sG_{c,s},
    \qquad
    m_b:=\sum_s c_sg_{b,s}.
    \label{eq:aggregated_momenta}
\end{equation}
For any reference mean $\bar\mu\in\mathbb{R}^{d}$,
\begin{equation}
    M_W=M_c+m_b\bar\mu^{\mathsf T}+R_{\bar\mu},
    \qquad
    R_{\bar\mu}
    :=\sum_s c_sg_{b,s}(\mu_s-\bar\mu)^{\mathsf T},
    \label{eq:momentum_remainder}
\end{equation}
with the bound
\begin{equation}
    \lVert R_{\bar\mu}\rVert_F
    \leq\sum_s |c_s|\,\lVert g_{b,s}\rVert_2
                   \lVert\mu_s-\bar\mu\rVert_2.
    \label{eq:momentum_remainder_bound}
\end{equation}
In particular, if all $\mu_s$ equal a common $\mu$, choosing
$\bar\mu=\mu$ makes the rank-one factorization exact.
\end{proposition}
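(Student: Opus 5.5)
The plan is to sum the exact per-step decomposition from Proposition~\ref{prop:rank_one_gradient} over the history, then re-center each rank-one term around the common reference mean $\bar\mu$. First, by Proposition~\ref{prop:rank_one_gradient} applied at each step $s$, we have $G_{W,s}=G_{c,s}+g_{b,s}\mu_s^{\mathsf T}$. Multiplying by $c_s$ and summing, linearity of the definitions in \eqref{eq:aggregated_momenta} gives
\begin{equation*}
    M_W=M_c+\sum_s c_s g_{b,s}\mu_s^{\mathsf T}.
\end{equation*}

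Next, write $\mu_s=\bar\mu+(\mu_s-\bar\mu)$ inside the outer product. The sum then splits into two parts. The first is $\sum_s c_sg_{b,s}\bar\mu^{\mathsf T}=m_b\bar\mu^{\mathsf T}$, because $\bar\mu$ does not depend on $s$. The second is exactly $R_{\bar\mu}$. This establishes \eqref{eq:momentum_remainder} as a purely algebraic identity, valid for any $\bar\mu$.

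For the bound \eqref{eq:momentum_remainder_bound}, apply the triangle inequality for the Frobenius norm to the finite sum defining $R_{\bar\mu}$. Then use the outer-product identity $\lVert uv^{\mathsf T}\rVert_F=\lVert u\rVert_2\lVert v\rVert_2$. This holds because $\lVert uv^{\mathsf T}\rVert_F^2=\operatorname{tr}(vu^{\mathsf T}uv^{\mathsf T})=\lVert u\rVert_2^2\lVert v\rVert_2^2$. Absolute homogeneity of the norm pulls out $|c_s|$.

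Finally, if every $\mu_s$ equals a common $\mu$, set $\bar\mu=\mu$. Each summand of $R_{\bar\mu}$ then vanishes, so $M_W=M_c+m_b\mu^{\mathsf T}$. Appending $\alpha m_b$ gives $[M_W,\alpha m_b]=[M_c,0]+m_b[\mu^{\mathsf T},\alpha]$, exactly as in Proposition~\ref{prop:rank_one_gradient}.

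No step is a genuine obstacle. The only care needed is to keep $\bar\mu$ independent of $s$, so that it factors out of the sum. The rest is linearity and the Frobenius norm of a rank-one matrix.
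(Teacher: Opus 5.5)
Your proposal is correct and follows the paper's proof essentially step for step. You sum the per-step identities, add and subtract $m_b\bar\mu^{\mathsf T}$, bound $R_{\bar\mu}$ by the triangle inequality together with $\lVert uv^{\mathsf T}\rVert_F=\lVert u\rVert_2\lVert v\rVert_2$, and note that the remainder vanishes in the common-mean case. Your trace verification of the outer-product identity and the augmented-matrix remark are harmless additions.
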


\begin{proof}
Summing the per-step identities gives
\begin{equation}
    M_W=M_c+\sum_s c_sg_{b,s}\mu_s^{\mathsf T}.
\end{equation}
Add and subtract
$\sum_s c_sg_{b,s}\bar\mu^{\mathsf T}=m_b\bar\mu^{\mathsf T}$ to obtain
\eqref{eq:momentum_remainder}.  The triangle inequality and
$\lVert xy^{\mathsf T}\rVert_F=\lVert x\rVert_2\lVert y\rVert_2$ yield
\eqref{eq:momentum_remainder_bound}.  The common-mean case makes every term in
$R_{\bar\mu}$ vanish.
\end{proof}

Thus the affine channel is exactly rank one at a fixed batch.  For momentum or
other temporal aggregates, it remains an exact rank-one term plus an explicit
mean-drift remainder; calling it exactly rank one without controlling that
remainder would require an additional assumption.

\subsection{Conditional selective attenuation under ideal spectral whitening}
\label{subsec:selective_attenuation}

Let
\begin{equation}
    C:=M_WM_W^{\mathsf T},
    \qquad
    H_\alpha:=C+\alpha^2m_bm_b^{\mathsf T}.
    \label{eq:alpha_gram}
\end{equation}
In this subsection assume $C\succ0$, so $H_\alpha\succ0$ for every
$\alpha\geq0$, and define the ideal left-whitening operator
\begin{equation}
    P_\alpha:=H_\alpha^{-1/2}.
    \label{eq:ideal_preconditioner}
\end{equation}

\begin{theorem}[Loewner-monotone gain attenuation]
\label{thm:loewner_attenuation}
If $0\leq\alpha_1\leq\alpha_2$, then
\begin{equation}
    H_{\alpha_1}\preceq H_{\alpha_2},
    \qquad
    P_{\alpha_2}\preceq P_{\alpha_1}.
    \label{eq:loewner_attenuation}
\end{equation}
Consequently, for every $x\in\mathbb{R}^{m}$,
\begin{equation}
    x^{\mathsf T}P_{\alpha_2}x
    \leq x^{\mathsf T}P_{\alpha_1}x.
    \label{eq:quadratic_gain_attenuation}
\end{equation}
\end{theorem}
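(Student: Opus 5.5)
The plan has three steps: establish the Loewner ordering of the Gram matrices, pass to inverse square roots through operator monotonicity, and then read off the quadratic-form statement from the definition of the Loewner order.

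\emph{Step 1: ordering of the Gram matrices.} From \eqref{eq:alpha_gram},
\[
H_{\alpha_2}-H_{\alpha_1}=(\alpha_2^2-\alpha_1^2)\,m_bm_b^{\mathsf T}.
\]
Since $0\le\alpha_1\le\alpha_2$, the scalar factor is nonnegative. The outer product is positive semidefinite by the quadratic-form argument used in the proof of Theorem~\ref{thm:rank_one_psd}. This gives $H_{\alpha_1}\preceq H_{\alpha_2}$, exactly as in Corollary~\ref{cor:eigenvalue_response}. Both matrices are positive definite because $C\succ0$.

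\emph{Step 2: from $H$ to $P=H^{-1/2}$.} This is the only nontrivial step. I would split it into two classical monotonicity facts for positive definite matrices.

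First, matrix inversion is order-reversing. If $0\prec A\preceq B$, then
\[
B^{-1/2}AB^{-1/2}\preceq I,
\]
so every eigenvalue of $B^{-1/2}AB^{-1/2}$ is at most one. Hence the eigenvalues of its inverse $B^{1/2}A^{-1}B^{1/2}$ are at least one. Conjugating back by $B^{-1/2}$ gives $A^{-1}\succeq B^{-1}$.

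Second, the square root is operator monotone (Löwner--Heinz with exponent $1/2$). If $0\preceq X\preceq Y$, then $X^{1/2}\preceq Y^{1/2}$. The standard short proof goes as follows. Suppose $X^{1/2}-Y^{1/2}$ had a positive eigenvalue $\lambda$ with unit eigenvector $x$. Then
\[
x^{\mathsf T}(X-Y)x=\lambda\,x^{\mathsf T}(X^{1/2}+Y^{1/2})x>0
\]
when $Y^{1/2}\succ0$. This contradicts $X\preceq Y$.

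Applying the first fact to $H_{\alpha_1}\preceq H_{\alpha_2}$ gives $H_{\alpha_2}^{-1}\preceq H_{\alpha_1}^{-1}$. Applying the square root to that ordering gives $P_{\alpha_2}\preceq P_{\alpha_1}$. I expect this to be the main obstacle, because Loewner order is not preserved by arbitrary matrix functions. The eigenvector-trick proof of square-root monotonicity should be written carefully, or cited as the Löwner--Heinz theorem.

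To check the identity in that proof, write $X^{1/2}=Y^{1/2}+E$ with $Ex=\lambda x$. Expanding,
\[
x^{\mathsf T}(X-Y)x=x^{\mathsf T}\bigl(Y^{1/2}E+EY^{1/2}+E^2\bigr)x=2\lambda\,x^{\mathsf T}Y^{1/2}x+\lambda^2>0,
\]
using symmetry and $Ex=\lambda x$. This matches $\lambda\,x^{\mathsf T}(X^{1/2}+Y^{1/2})x$, so the argument goes through.

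\emph{Step 3: the quadratic-form bound.} Inequality \eqref{eq:quadratic_gain_attenuation} is just the definition of $P_{\alpha_2}\preceq P_{\alpha_1}$ evaluated at an arbitrary $x\in\mathbb{R}^m$.
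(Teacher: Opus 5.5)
Your proof is correct and matches the paper's argument step for step: the rank-one difference $(\alpha_2^2-\alpha_1^2)m_bm_b^{\mathsf T}\succeq0$, then order reversal under inversion, then operator monotonicity of the square root, then reading off the quadratic forms. The only difference is that you prove the two classical monotonicity facts where the paper cites them; your eigenvector argument is valid, and the $\lambda^2$ term alone already gives strict positivity, so you do not need the hypothesis $Y^{1/2}\succ0$.
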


\begin{proof}
The first order relation follows from
\begin{equation}
    H_{\alpha_2}-H_{\alpha_1}
    =(\alpha_2^2-\alpha_1^2)m_bm_b^{\mathsf T}\succeq0.
\end{equation}
Matrix inversion reverses the Loewner order on positive-definite matrices, so
$H_{\alpha_2}^{-1}\preceq H_{\alpha_1}^{-1}$.  The principal square-root map
is operator monotone on the positive-semidefinite cone; applying it to the
last inequality gives
$H_{\alpha_2}^{-1/2}\preceq H_{\alpha_1}^{-1/2}$.  Taking quadratic forms
proves \eqref{eq:quadratic_gain_attenuation}.
\end{proof}

Theorem~\ref{thm:loewner_attenuation} is global in the Loewner sense.  A
direction-by-direction selective formula requires an alignment assumption.

\begin{theorem}[Exact selective formula under eigenvector alignment]
\label{thm:aligned_selective_attenuation}
Assume $m_b\neq0$, let $u:=m_b/\lVert m_b\rVert_2$, and suppose
\begin{equation}
    Cu=\lambda_bu,
    \qquad \lambda_b>0.
    \label{eq:alignment_assumption}
\end{equation}
Then
\begin{equation}
    P_\alpha u
    =\frac{1}{\sqrt{\lambda_b+\alpha^2\lVert m_b\rVert_2^2}}u.
    \label{eq:bias_direction_gain}
\end{equation}
The scalar gain in \eqref{eq:bias_direction_gain} is strictly decreasing in
$\alpha>0$.  If $q\perp u$ is a unit eigenvector of $C$ with eigenvalue
$\lambda_q$, then
\begin{equation}
    P_\alpha q=\lambda_q^{-1/2}q,
    \label{eq:orthogonal_gain}
\end{equation}
which is independent of $\alpha$.
\end{theorem}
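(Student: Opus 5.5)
The argument is a direct eigenvector computation. We show that $u$ and every eigenvector $q\perp u$ of $C$ are eigenvectors of $H_\alpha$. We then use the fact that the principal inverse square root of a symmetric positive-definite matrix acts on each eigenvector by the inverse square root of its eigenvalue.

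First, for the bias direction, write $m_b=\lVert m_b\rVert_2u$, so that $m_bm_b^{\mathsf T}=\lVert m_b\rVert_2^2uu^{\mathsf T}$. Combined with the alignment assumption \eqref{eq:alignment_assumption}, this gives
\begin{equation*}
    H_\alpha u = Cu+\alpha^2\lVert m_b\rVert_2^2u(u^{\mathsf T}u)
    =\bigl(\lambda_b+\alpha^2\lVert m_b\rVert_2^2\bigr)u .
\end{equation*}
Second, for a unit eigenvector $q$ of $C$ with $q\perp u$, we have $u^{\mathsf T}q=0$, hence $H_\alpha q=Cq=\lambda_qq$. The standing assumption $C\succ0$ ensures $\lambda_q>0$.

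Third, we pass from $H_\alpha$ to $P_\alpha=H_\alpha^{-1/2}$. Since $H_\alpha\succ0$ is symmetric, it has a spectral decomposition $H_\alpha=\sum_k\mu_kw_kw_k^{\mathsf T}$, and $P_\alpha=\sum_k\mu_k^{-1/2}w_kw_k^{\mathsf T}$. If $H_\alpha w=\mu w$, then $w$ lies in the $\mu$-eigenspace of $H_\alpha$, and $P_\alpha$ acts on that eigenspace as $\mu^{-1/2}I$. So $P_\alpha w=\mu^{-1/2}w$; this holds even when eigenvalues are repeated. Applying this with $w=u$ gives \eqref{eq:bias_direction_gain}, and with $w=q$ gives \eqref{eq:orthogonal_gain}. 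The resulting gain $\lambda_q^{-1/2}$ does not involve $\alpha$.

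Finally, the gain $g(\alpha)=(\lambda_b+\alpha^2\lVert m_b\rVert_2^2)^{-1/2}$ is strictly decreasing on $\alpha>0$, because $\alpha^2\lVert m_b\rVert_2^2$ is strictly increasing there ($m_b\neq0$) and $t\mapsto t^{-1/2}$ is strictly decreasing on $t>0$. Equivalently, one can compute
\begin{equation*}
    g'(\alpha)=-\alpha\lVert m_b\rVert_2^2(\lambda_b+\alpha^2\lVert m_b\rVert_2^2)^{-3/2}<0 .
\end{equation*}
The only point requiring care is the third step, which must justify that the functional calculus respects eigenvectors when eigenvalues are repeated. The eigenspace argument above settles it, so the proof presents no real obstacle.
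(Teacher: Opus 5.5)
Your proposal is correct and follows essentially the same route as the paper. Both proofs verify that $u$ and $q$ are eigenvectors of $H_\alpha$, using $m_b=\lVert m_b\rVert_2 u$ and $m_bm_b^{\mathsf T}q=0$, then apply functional calculus to get the inverse-square-root gains and differentiate the scalar gain. Your eigenspace argument for repeated eigenvalues and your use of the standing assumption $C\succ0$ to get $\lambda_q>0$ add a little rigor that the paper leaves implicit.
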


\begin{proof}
Since $m_b=\lVert m_b\rVert_2u$,
\begin{equation}
    H_\alpha u
    =\bigl(\lambda_b+\alpha^2\lVert m_b\rVert_2^2\bigr)u.
\end{equation}
Functional calculus for the positive-definite matrix $H_\alpha$ gives
\eqref{eq:bias_direction_gain}.  Its scalar derivative equals
\begin{equation}
    -\frac{\alpha\lVert m_b\rVert_2^2}
    {(\lambda_b+\alpha^2\lVert m_b\rVert_2^2)^{3/2}}<0.
\end{equation}
For $q\perp u$, one has $m_bm_b^{\mathsf T}q=0$, hence
$H_\alpha q=Cq=\lambda_qq$, which proves
\eqref{eq:orthogonal_gain}.
\end{proof}

The attenuation in Theorem~\ref{thm:aligned_selective_attenuation} concerns
the \emph{preconditioner gain}.  The following calculation shows why this
must be separated from the physical update carried by the scaled affine
column.

\begin{proposition}[Scaled bias-column response]
\label{prop:scaled_bias_response}
Under the assumptions of
Theorem~\ref{thm:aligned_selective_attenuation}, the last column of
$P_\alpha A_\alpha$ has norm
\begin{equation}
    \lVert P_\alpha(\alpha m_b)\rVert_2
    =\frac{\alpha\lVert m_b\rVert_2}
    {\sqrt{\lambda_b+\alpha^2\lVert m_b\rVert_2^2}}.
    \label{eq:physical_bias_response}
\end{equation}
This quantity is strictly increasing for $\alpha>0$ and converges to $1$ as
$\alpha\to\infty$.
\end{proposition}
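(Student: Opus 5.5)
The plan is to reduce the statement to the eigen-identity already established in Theorem~\ref{thm:aligned_selective_attenuation} and then study a single scalar function of $\alpha$. The last column of $A_\alpha=[M_W,\alpha m_b]$ is $\alpha m_b$, and the last column of $P_\alpha A_\alpha$ is therefore $P_\alpha(\alpha m_b)$, since left multiplication acts column by column. Writing $m_b=\lVert m_b\rVert_2u$ with $u$ the unit vector of the alignment assumption \eqref{eq:alignment_assumption}, linearity gives $P_\alpha(\alpha m_b)=\alpha\lVert m_b\rVert_2\,P_\alpha u$. Substituting \eqref{eq:bias_direction_gain} then shows that this column equals
\[
\frac{\alpha\lVert m_b\rVert_2}{\sqrt{\lambda_b+\alpha^2\lVert m_b\rVert_2^2}}\,u .
\]
Because $\lVert u\rVert_2=1$ and $\alpha>0$, taking norms yields \eqref{eq:physical_bias_response} at once.

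For monotonicity, set $\beta:=\lVert m_b\rVert_2>0$ and $f(\alpha):=\alpha\beta(\lambda_b+\alpha^2\beta^2)^{-1/2}$. Differentiating gives
\[
f'(\alpha)=\beta(\lambda_b+\alpha^2\beta^2)^{-1/2}-\alpha^2\beta^3(\lambda_b+\alpha^2\beta^2)^{-3/2}
=\frac{\beta\lambda_b}{(\lambda_b+\alpha^2\beta^2)^{3/2}}.
\]
This is strictly positive because $\lambda_b>0$ and $\beta>0$, so $f$ is strictly increasing on $\alpha>0$. An alternative route is to write $f(\alpha)=\bigl(1+\lambda_b/(\alpha^2\beta^2)\bigr)^{-1/2}$ for $\alpha>0$. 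The denominator $1+\lambda_b/(\alpha^2\beta^2)$ is strictly decreasing in $\alpha$, so $f$ is strictly increasing, and since that denominator tends to $1$ as $\alpha\to\infty$, the limit $f(\alpha)\to1$ follows. I would present this second form because it gives both the monotonicity and the limit at once.

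No step here is a real obstacle. The one point needing care is that the statement is about the column of $P_\alpha A_\alpha$, not about $P_\alpha$ alone. The increase of this column must be reconciled with the attenuation in Theorem~\ref{thm:aligned_selective_attenuation}: the factor $\alpha$ from the scaled column grows faster than the gain $(\lambda_b+\alpha^2\beta^2)^{-1/2}$ decays. This is exactly the separation between preconditioner gain and the carried column that the statement is meant to illustrate. I would also note in a remark that the limit $1$ is the norm of the scaled column under whitening, not of the decoded physical bias. Recovering the physical bias update requires dividing by $\alpha$, which gives $\beta(\lambda_b+\alpha^2\beta^2)^{-1/2}\to0$.
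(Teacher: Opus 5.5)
Your proposal is correct and matches the paper's proof: multiply the aligned gain formula by $\alpha\lVert m_b\rVert_2$, differentiate to get $\beta\lambda_b/(\lambda_b+\alpha^2\beta^2)^{3/2}>0$, and divide numerator and denominator by $\alpha\beta$ for the limit. One caveat on your closing remark: the paper does not assume the decoder divides by $\alpha$, and explicitly allows implementations that emit the scaled column directly, so that remark should be stated conditionally.
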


\begin{proof}
Equation \eqref{eq:physical_bias_response} follows by multiplying
\eqref{eq:bias_direction_gain} by $\alpha\lVert m_b\rVert_2$.  With
$\beta:=\lVert m_b\rVert_2$, differentiation gives
\begin{equation}
    \frac{d}{d\alpha}
    \frac{\alpha\beta}{\sqrt{\lambda_b+\alpha^2\beta^2}}
    =\frac{\beta\lambda_b}
    {(\lambda_b+\alpha^2\beta^2)^{3/2}}>0.
\end{equation}
The limit follows after dividing numerator and denominator by $\alpha\beta$.
\end{proof}

Therefore increasing $\alpha$ rigorously attenuates the ideal inverse-square-
root gain along the aligned affine direction, but it does not by itself imply
that an algorithm emitting the scaled last column without division by
$\alpha$ produces a smaller physical bias update.  Such a claim requires the
complete output map, including clipping, normalization, and decoding.

\subsection{Finite-step spectral maps preserve modal subspaces}
\label{subsec:finite_spectral_maps}

Let $A\in\mathbb{R}^{m\times n}$ have compact singular value decomposition
\begin{equation}
    A=U\operatorname{diag}(\sigma_1,\ldots,\sigma_r)V^{\mathsf T},
    \qquad \sigma_i>0.
    \label{eq:compact_svd}
\end{equation}
Any singular-value map of the form
\begin{equation}
    \mathcal{S}_\phi(A)
    :=U\operatorname{diag}\bigl(\phi(\sigma_1),\ldots,
        \phi(\sigma_r)\bigr)V^{\mathsf T}
    \label{eq:spectral_map}
\end{equation}
preserves the left and right singular subspaces of $A$.  The polar map uses
$\phi(\sigma)=1$.  Exact-SVD capped inverse maps use a different scalar
function $\phi$, generally depending also on a normalization by the largest
singular value.  These are different spectral response laws applied to the
same affine object; equality between them is not assumed.

For a finite Newton--Schulz-type polynomial realization, choose a scale
$\kappa>0$ and define
\begin{equation}
    X_0:=\frac{A}{\kappa},
    \qquad
    X_{k+1}:=aX_k+bX_kX_k^{\mathsf T}X_k
      +cX_kX_k^{\mathsf T}X_kX_k^{\mathsf T}X_k,
    \label{eq:ns_iteration}
\end{equation}
where $a,b,c\in\mathbb{R}$ are fixed coefficients.

\begin{proposition}[Exact modal invariance of finite polynomial iterations]
\label{prop:ns_modal_invariance}
For every integer $k\geq0$,
\begin{equation}
    X_k
    =U\operatorname{diag}
      \left(f_k\left(\frac{\sigma_1}{\kappa}\right),\ldots,
            f_k\left(\frac{\sigma_r}{\kappa}\right)\right)V^{\mathsf T},
    \label{eq:ns_scalarization}
\end{equation}
where
\begin{equation}
    f_0(x)=x,
    \qquad
    f_{k+1}(x)=af_k(x)+bf_k(x)^3+cf_k(x)^5.
    \label{eq:ns_scalar_recursion}
\end{equation}
Thus every finite iterate preserves the singular-vector subspaces of $A$ and
changes only its singular values.
\end{proposition}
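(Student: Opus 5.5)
The argument is an induction on $k$ using the compact SVD \eqref{eq:compact_svd}. Write $\Sigma:=\operatorname{diag}(\sigma_1,\ldots,\sigma_r)$. Then $U\in\mathbb{R}^{m\times r}$ and $V\in\mathbb{R}^{n\times r}$ have orthonormal columns, so $U^{\mathsf T}U=V^{\mathsf T}V=I_r$. For any $x\in\mathbb{R}$ and any scalar function $g$ applied entrywise to a diagonal matrix, write $g(D)$ for the resulting diagonal matrix.

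The base case $k=0$ is immediate: $X_0=A/\kappa=U(\Sigma/\kappa)V^{\mathsf T}=U f_0(\Sigma/\kappa)V^{\mathsf T}$, since $f_0(x)=x$.

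For the inductive step, assume $X_k=U\Lambda_kV^{\mathsf T}$ with $\Lambda_k:=f_k(\Sigma/\kappa)$ diagonal. Orthonormality of the columns gives the two products we need:
\begin{equation*}
X_kX_k^{\mathsf T}X_k=U\Lambda_kV^{\mathsf T}V\Lambda_kU^{\mathsf T}U\Lambda_kV^{\mathsf T}=U\Lambda_k^3V^{\mathsf T},
\end{equation*}
and by the same cancellation of $V^{\mathsf T}V$ and $U^{\mathsf T}U$ factors,
\begin{equation*}
X_kX_k^{\mathsf T}X_kX_k^{\mathsf T}X_k=U\Lambda_k^5V^{\mathsf T}.
\end{equation*}
Here no transpose of $\Lambda_k$ causes trouble, because $\Lambda_k$ is square, diagonal and real, so $\Lambda_k^{\mathsf T}=\Lambda_k$.

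Substituting these into \eqref{eq:ns_iteration} gives
\begin{equation*}
X_{k+1}=U\bigl(a\Lambda_k+b\Lambda_k^3+c\Lambda_k^5\bigr)V^{\mathsf T}.
\end{equation*}
Powers of a diagonal matrix act entrywise, so the middle factor is diagonal with entries $af_k(x_i)+bf_k(x_i)^3+cf_k(x_i)^5=f_{k+1}(x_i)$, where $x_i=\sigma_i/\kappa$. This is \eqref{eq:ns_scalarization} at $k+1$, with $f_{k+1}$ as in \eqref{eq:ns_scalar_recursion}.

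For the subspace conclusion, the column space of $X_k$ lies in $\operatorname{range}(U)$ and its row space lies in $\operatorname{range}(V)$. So each iterate is a spectral map of the form \eqref{eq:spectral_map} with $\phi(\sigma)=f_k(\sigma/\kappa)$, built on the same $U$ and $V$.

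The main subtlety is that some $f_k(\sigma_i/\kappa)$ may be zero or negative, so \eqref{eq:ns_scalarization} need not be a valid SVD. Negative entries would be handled by absorbing the sign into the corresponding column of $U$; zero entries would be handled by discarding the corresponding modes, so the ranges can shrink but never leave the original singular subspaces. I would therefore state "preserves" as containment in the original subspaces, with equality whenever all $f_k(\sigma_i/\kappa)\neq0$, and note that repeated singular values cause no ambiguity because each $f_k$ is a scalar function applied to equal values.
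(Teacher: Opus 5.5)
Your proof is correct and follows the paper's argument essentially step for step: induction on $k$, with $U^{\mathsf T}U=V^{\mathsf T}V=I_r$ collapsing $X_kX_k^{\mathsf T}X_k$ and the fifth-order product to $U\Lambda_k^3V^{\mathsf T}$ and $U\Lambda_k^5V^{\mathsf T}$, then reading off the scalar recursion entrywise. Your closing caveat sharpens the paper's informal final sentence, which its own proof does not address: zero values of $f_k(\sigma_i/\kappa)$ can shrink the ranges, and negative values need a sign absorption, so ``preserves'' is best read as containment in the original singular subspaces.
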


\begin{proof}
The formula holds for $k=0$ by \eqref{eq:compact_svd}.  Assume it holds for
$k$ and abbreviate the diagonal matrix in
\eqref{eq:ns_scalarization} by $F_k$.  Orthogonality of the columns of $U$ and
$V$ gives
\begin{align}
    X_kX_k^{\mathsf T}X_k
    &=UF_k^3V^{\mathsf T},\\
    X_kX_k^{\mathsf T}X_kX_k^{\mathsf T}X_k
    &=UF_k^5V^{\mathsf T}.
\end{align}
Substitution into \eqref{eq:ns_iteration} yields
$X_{k+1}=U(aF_k+bF_k^3+cF_k^5)V^{\mathsf T}$, which is precisely
\eqref{eq:ns_scalar_recursion}.  Induction completes the proof.
\end{proof}

\begin{corollary}[Finite-step error relative to the polar factor]
\label{cor:ns_polar_error}
Let $Q:=UV^{\mathsf T}$ be the compact polar factor of $A$.  Then
\begin{equation}
    \lVert X_K-Q\rVert_2
    =\max_{1\leq i\leq r}
      \left|f_K\left(\frac{\sigma_i}{\kappa}\right)-1\right|.
    \label{eq:ns_polar_error}
\end{equation}
Therefore convergence to the polar map follows only after proving that the
scalar recursion approaches $1$ uniformly over the normalized singular-value
range encountered by the algorithm.
\end{corollary}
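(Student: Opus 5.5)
The plan is to use the scalarization of Proposition~\ref{prop:ns_modal_invariance} and reduce the operator-norm error to a statement about diagonal matrices. Applying that proposition with $k=K$ gives
\begin{equation*}
    X_K=UF_KV^{\mathsf T},
    \qquad
    F_K:=\operatorname{diag}\bigl(f_K(\sigma_1/\kappa),\ldots,f_K(\sigma_r/\kappa)\bigr).
\end{equation*}
The compact polar factor is $Q=UV^{\mathsf T}=UI_rV^{\mathsf T}$, with the same $U$ and $V$. Subtracting the two expressions yields
\begin{equation*}
    X_K-Q=U(F_K-I_r)V^{\mathsf T},
\end{equation*}
so the difference is again in the same modal form.

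Next I will show that multiplication by $U$ on the left and $V^{\mathsf T}$ on the right preserves the spectral norm. Because $U$ and $V$ have orthonormal columns, $U^{\mathsf T}U=V^{\mathsf T}V=I_r$, and for every $x$,
\begin{equation*}
    \lVert U(F_K-I_r)V^{\mathsf T}x\rVert_2=\lVert(F_K-I_r)V^{\mathsf T}x\rVert_2 .
\end{equation*}
Hence the norm is at most $\lVert F_K-I_r\rVert_2$. The reverse inequality comes from testing on $x=Vy$, which gives $V^{\mathsf T}x=y$ for an arbitrary $y\in\mathbb{R}^r$. Equivalently, $U(F_K-I_r)V^{\mathsf T}$ is already a (signed) singular value decomposition: its singular values are the numbers $\lvert f_K(\sigma_i/\kappa)-1\rvert$, after absorbing signs into the columns of $U$.

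The spectral norm of a real diagonal matrix is the maximum absolute diagonal entry. This gives \eqref{eq:ns_polar_error} directly.

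The only point needing care is that the factors are \emph{compact}: $U$ and $V$ are non-square. So I must argue with $U^{\mathsf T}U=I_r$ and with test vectors in $\operatorname{range}(V)$, rather than invoking unitary invariance for square orthogonal matrices. On $\operatorname{range}(V)^\perp$ both $X_K$ and $Q$ vanish, so that subspace contributes nothing.

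The closing \emph{therefore} sentence is an interpretive remark and follows from the identity. The error tends to zero exactly when $\max_i\lvert f_K(\sigma_i/\kappa)-1\rvert\to0$. For an algorithm facing arbitrary inputs, this requires uniform convergence of the scalar recursion over the range of normalized singular values it encounters.
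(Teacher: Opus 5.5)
Your proposal is correct and follows the paper's proof. Both arguments subtract $Q=UI_rV^{\mathsf T}$ from the scalarized iterate, invoke invariance of the spectral norm under the semi-orthogonal factors, and read off the largest absolute diagonal entry; your handling of the non-square compact factors via $U^{\mathsf T}U=I_r$ and test vectors $x=Vy$ spells out the invariance step the paper only cites.
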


\begin{proof}
Subtract $Q=UI_rV^{\mathsf T}$ from
\eqref{eq:ns_scalarization}.  The spectral norm is invariant under
multiplication by semi-orthogonal matrices, and the norm of a diagonal matrix
is the largest absolute diagonal entry, which proves
\eqref{eq:ns_polar_error}.
\end{proof}

Proposition~\ref{prop:ns_modal_invariance} is exact for every finite number of
steps and does not require convergence.  Corollary~\ref{cor:ns_polar_error}
also shows that modal invariance alone is insufficient to identify a finite
polynomial iterate with either the polar map or a capped regularized-inverse
map.

\subsection{Logical scope: allocation control is not a generalization theorem}
\label{subsec:logical_scope}

The preceding results establish exact input--allocation relations at a frozen
state and conditional statements about spectral gain.  They do not impose a
universal ordering on downstream performance.

\begin{proposition}[No task-independent performance ordering]
\label{prop:no_universal_performance}
Fix a parameter state $\theta$ and two distinct candidate updates
$\delta_1\neq\delta_2$.  Knowledge of their allocation vectors alone cannot
determine which update has lower evaluation risk over all smooth nonnegative
risks.
\end{proposition}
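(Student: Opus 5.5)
The plan is to exhibit two smooth nonnegative risks $R_1,R_2:\mathbb{R}^P\to[0,\infty)$ that agree on everything the allocation vectors could encode but reverse the ordering of $\delta_1$ and $\delta_2$. Under $R_1$, $\theta+\delta_1$ should have strictly lower risk than $\theta+\delta_2$; under $R_2$, the reverse should hold. If such a pair exists, then any rule that reads only $(a(\delta_1),a(\delta_2))$, together with the channel maps $C_k$ and the state $\theta$, and outputs a winner must be wrong for at least one of the two risks. This is the logical content of the statement.

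For the construction, set $x_1:=\theta+\delta_1$ and $x_2:=\theta+\delta_2$. These points are distinct because $\delta_1\neq\delta_2$. Define
\begin{equation*}
    R_1(x):=\lVert x-x_1\rVert_2^2,
    \qquad
    R_2(x):=\lVert x-x_2\rVert_2^2.
\end{equation*}
Both functions are smooth (polynomial) and nonnegative. They give $R_1(x_1)=0<\lVert\delta_1-\delta_2\rVert_2^2=R_1(x_2)$, and symmetrically $R_2(x_2)=0<R_2(x_1)$. So the ordering of the two updates is reversed between the two risks.

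It remains to check that the allocation data are identical in the two scenarios. This is immediate, because $a(\delta_1)$ and $a(\delta_2)$ are defined by \eqref{eq:operational_allocation} from $\delta_1,\delta_2$ and the fixed $C_k$ alone, and do not depend on the risk. The only caveat is the domain of $a$. If $D(\delta_\ell)=0$ for some $\ell$, the allocation vector is undefined, and the statement is read over the inputs for which it is defined. Nothing else changes.

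A stronger version would also remove dependence on the update magnitudes. One could note, via Theorem~\ref{thm:global_scale_invariance}, that rescaled updates share allocations, so allocations carry even less information than the updates themselves. The two-risk argument above does not need this refinement, since it already works with the full updates known.

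The main obstacle is stating precisely what ``cannot determine'' means, not the construction. I would formalize it as the nonexistence of a decision function $\Phi$ on pairs of allocation vectors such that, for every smooth nonnegative $R$, $\Phi$ selects the update with lower $R$-risk. The two-risk construction then contradicts any such $\Phi$, because $\Phi$ receives identical inputs in both scenarios but the correct answers differ. Ties are not an issue, since both inequalities are strict.

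If the intended reading lets $\delta_1,\delta_2$ vary with fixed allocation vectors, the same quadratic construction applies pointwise. I would also remark that the risks can be made bounded, for example $1-e^{-\lVert x-x_\ell\rVert^2}$, if one prefers bounded losses.
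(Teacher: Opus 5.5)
Your proposal is correct and is essentially the paper's own argument: the paper uses the same two quadratic risks $R_1(\vartheta)=\lVert\vartheta-(\theta+\delta_1)\rVert_2^2$ and $R_2(\vartheta)=\lVert\vartheta-(\theta+\delta_2)\rVert_2^2$, shows that the strict ordering reverses, and notes that the allocation vectors, which do not depend on the risk, are unchanged. Your decision-function formalization via $\Phi$ and your remarks on the domain of $a$ and on bounded variants are harmless refinements of the same route.
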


\begin{proof}
Consider the two smooth nonnegative functions
\begin{equation}
    R_1(\vartheta):=\lVert\vartheta-(\theta+\delta_1)\rVert_2^2,
    \qquad
    R_2(\vartheta):=\lVert\vartheta-(\theta+\delta_2)\rVert_2^2.
    \label{eq:reversing_risks}
\end{equation}
For $R_1$,
\begin{equation}
    R_1(\theta+\delta_1)=0
    <\lVert\delta_2-\delta_1\rVert_2^2
    =R_1(\theta+\delta_2).
\end{equation}
For $R_2$, the strict inequality is reversed.  Hence the same pair of
updates, and therefore the same pair of allocation vectors, can receive
opposite performance orderings under admissible risks.  Additional assumptions
linking the data distribution, model, risk, and update direction are necessary
for a generalization claim.
\end{proof}

The rigorous conclusion of this section is consequently limited to the
following statements.  A nonzero global scale controls instantaneous update
magnitude but cancels from normalized allocation.  The coordinate exponent
$p_c$ exactly controls pairwise coordinate log-odds whenever the corresponding
second moments differ.  The affine exponent $p_\alpha$ produces a rank-one
positive-semidefinite perturbation and exactly controls the raw affine-column
logit under nondegeneracy.  The affine gradient contains an exact rank-one
mean component at a fixed batch, with a quantified remainder under changing
batch means.  Ideal inverse-square-root gain decreases as the affine column is
strengthened, while the physical scaled-column response requires separate
analysis.  Finally, finite polynomial spectral iterations preserve modal
subspaces exactly, but their approximation quality is governed by their scalar
response on the realized singular-value range.

\section{Experiments: From Instantaneous Actuation to Trajectory-Level Behavior}
\label{sec:experiments}

Section~\ref{sec:allocation_theory} established frozen-state input--allocation laws.  The experiments
are therefore organized as an evidence ladder rather than as a flat benchmark
comparison.  We first test whether the proposed controls change allocation
while the state is held fixed.  We then ask how those interventions alter
training trajectories on a controlled benchmark, where stable, spurious, and
noise channels are known.  Finally, we report a descriptive transfer study on
four standard datasets.  This ordering is important: performance differences
alone cannot identify an allocation mechanism, while a one-step mechanism
check alone cannot establish useful trajectory-level behavior.

\subsection{Questions, claims, and evidence hierarchy}
\label{sec:exp_questions}

The experimental section is built around four questions.

\begin{itemize}
    \item \textbf{RQ1: Direct actuation.}  With the optimizer state and
    mini-batch fixed, does $\pc$ redistribute coordinate-channel command
    energy, and does $\pa$ obey the affine-column logit law?

    \item \textbf{RQ2: Trajectory-level utility.}  When the controls are used
    throughout training, which ranges improve final accuracy, loss, and
    worst-group accuracy across random seeds?

    \item \textbf{RQ3: Failure boundary.}  Is performance monotone in control
    strength, or does excessive allocation pressure create a collapse regime?

    \item \textbf{RQ4: External validity.}  Does the useful range identified
    by the controlled benchmark remain competitive across classification and
    language-modeling tasks?
\end{itemize}

\begin{table}[t]
  \centering
  \caption{Evidence hierarchy and the maximum claim supported by each block.}
  \label{tab:evidence-hierarchy}
  \small
  \begin{tabularx}{\textwidth}{@{}p{2.15cm}p{3.05cm}p{3.4cm}L@{}}
    \toprule
    Block & Unit of comparison & Primary observable & Admissible conclusion \\
    \midrule
    Same-state replay & Fixed state, batch, and moments & Command shares, affine raw-share logit, gain/output ratios & Direct instantaneous actuation; no trajectory or generalization claim \\
    Controlled training & Five paired seeds, 42--46 & Final test accuracy/loss, tail loss, worst-group accuracy & Reproducible trajectory-level association on the controlled benchmark \\
    Diagnostics & Epoch checkpoints and known channel labels & Stable, spurious, noise, and bias allocation & Mechanistic consistency and failure-mode interpretation \\
    Transfer suite & Four datasets, seed 42 & Validation-selected accuracy or perplexity; within-task rank & Descriptive external validity only; no significance claim \\
    \bottomrule
  \end{tabularx}
\end{table}

\subsection{Experimental design}
\label{sec:exp_design}

\subsubsection{Controlled allocation benchmark}
\label{subsec:controlled_benchmark}

The confirmatory benchmark is a binary classification problem with
$8{,}192$ training and $8{,}192$ test examples.  Its $3{,}124$ input
coordinates are partitioned into four known blocks: 4 dense coordinates, 60
stable sparse coordinates, 60 spurious sparse coordinates, and 3,000 noise
coordinates.  The stable signal has correlation $1.0$ in both splits.  The
spurious block has correlation $0.9$ in training and $0$ at test time, creating
an explicit distribution shift.  A one-hidden-layer ReLU network with 64
hidden units is trained for 50 epochs with batch size 128.  The class prior is
$0.6$.

We compare three intervention families.  Coordinate control uses
$\pc\in\{0,0.1,\ldots,0.5\}$.  The spectral reference is Muon with momentum
$0.95$, Nesterov updates, five Newton--Schulz steps, and bfloat16 matrix
arithmetic.  Affine experiments include a probe-only condition at $\pa=1$ and
joint affine updates with
$\pa\in\{0,0.25,\ldots,2\}$.  Adam-style runs use
$\beta_1=0.9$, $\beta_2=0.999$, $\varepsilon=10^{-8}$, and zero weight decay.
Both the Adam-style and spectral learning rates are $10^{-3}$.  The joint bias
cap is $4.0$.

\subsubsection{Same-state replay}
\label{subsec:same_state_replay}

Trajectory comparisons mix two effects: the direct response of the optimizer
at the current state and the indirect effect of arriving at a different later
state.  We separate them by replaying all control values at checkpoints from
two source trajectories, PAdam $\pc=0.3$ and Joint $\pa=1$.  For each of five
seeds and each of 50 checkpoints, the state, mini-batch, gradients, and moments
are fixed while the candidate control is changed.  This yields
$5\times2\times50=500$ frozen states for each control value.  Checkpoints are
treated as repeated mechanism probes, not as 500 independent statistical
replicates.

\subsubsection{Transfer suite and model-selection rule}
\label{subsec:transfer_suite}

The transfer suite contains IMDB and Rotten Tomatoes classification, together
with Penn Treebank (PTB) and WikiText-2 language modeling.  We compare AdamW,
Muon, affine probe $\pa=1$, and joint affine controls
$\pa\in\{0,0.5,1,1.5,2\}$.  Classification reports test accuracy at the
validation-selected checkpoint; language modeling reports test perplexity at
the validation-selected checkpoint.  Only seed 42 is available for this suite,
so its role is exploratory corroboration rather than confirmatory replication.

\begin{table}[t]
  \centering
  \caption{Compact experimental protocol.  Values are taken from the supplied run manifests and summaries.}
  \label{tab:protocol}
  \small
  \begin{tabularx}{\textwidth}{@{}p{3.0cm}LL@{}}
    \toprule
    Item & Controlled benchmark & Transfer suite \\
    \midrule
    Tasks & Synthetic binary classification with known stable, spurious, and noise channels & IMDB, Rotten Tomatoes, PTB, and WikiText-2 \\
    Repetition & Seeds 42--46 ($n=5$) & Seed 42 only \\
    Selection & Final epoch; oracle extrema reported only as diagnostics & Validation-selected checkpoint \\
    Control grid & $\pc=0{:}0.1{:}0.5$; $\pa=0{:}0.25{:}2$ & $\pa\in\{0,0.5,1,1.5,2\}$ plus AdamW, Muon, and probe \\
    Primary metric & Final test accuracy; test loss and worst-group accuracy are co-primary diagnostics & Accuracy for classification; perplexity for language modeling \\
    Statistical unit & Seed; report mean $\pm$ sample SD and paired differences & Dataset-level rank; descriptive only \\
    \bottomrule
  \end{tabularx}
  \par\vspace{2pt}\footnotesize\raggedright
  The supplied controlled archive contains five completed seeds even though the original configuration also lists future seeds 47--51.  The analysis does not treat missing seeds as completed runs.
\end{table}

\subsubsection{Reporting policy}
\label{subsec:reporting_policy}

For the controlled benchmark, each method is summarized by the arithmetic
mean and sample standard deviation over seeds.  Differences from PAdam
$\pc=0$ are paired by seed, and the win count records the number of seeds with
higher final test accuracy.  With only five pairs, five wins correspond to a
two-sided exact sign-test value of $2/2^5=0.0625$; we therefore report effect
sizes and consistency rather than attaching significance stars.  No
hyperparameter is selected using test performance for the transfer suite; the
reported checkpoint follows the stored validation-selection rule.

\subsection{Control sweeps reveal useful and failure regimes}
\label{sec:control_sweeps}

\begin{figure}[t]
  \centering
  \includegraphics[width=\textwidth]{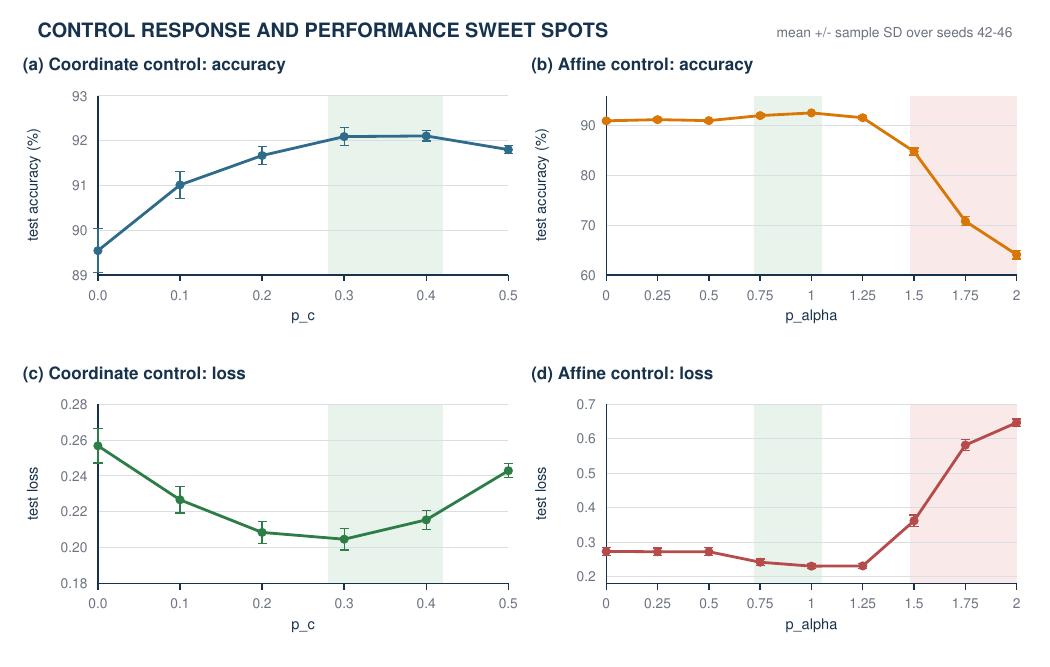}
  \caption{Five-seed control sweeps on the controlled benchmark.  Error bars
  are sample standard deviations across seeds 42--46.  Green shading marks the
  empirically useful range; red shading marks the high-$\pa$ collapse regime.
  These regions summarize the observed grid and are not confidence intervals.}
  \label{fig:control-sweeps}
\end{figure}

Coordinate control exhibits a broad intermediate optimum
(Fig.~\ref{fig:control-sweeps}a,c).  Final test accuracy rises from
$89.54\%$ at $\pc=0$ to $92.09\%$ at $\pc=0.3$ and $92.10\%$ at
$\pc=0.4$.  Mean test loss is minimized at $\pc=0.3$ ($0.2046$) and then
increases to $0.2429$ at $\pc=0.5$.  Thus $\pc$ is a genuine trajectory-level
intervention, but its performance effect is not monotone.  This is consistent
with Section~\ref{sec:allocation_theory}: the theorem identifies how allocation moves, not which
allocation must be optimal for a task.

Affine control has a narrower response window
(Fig.~\ref{fig:control-sweeps}b,d).  Joint $\pa=1$ reaches
$92.57\pm0.09\%$ accuracy and $0.2299$ loss.  Increasing the exponent to
$1.25$ already reduces accuracy, while $\pa=1.5$, $1.75$, and $2$ create
progressively severe underfitting.  At $\pa=2$, final training accuracy is only
about $71\%$ in the representative seed-42 run, so the $64.09\%$ mean test
accuracy is not an ordinary generalization trade-off; it is a training
collapse.  The full response curve should therefore remain in the main text,
not be hidden in an appendix.

\begin{table}[t]
  \centering
  \caption{Selected points from the five-seed controlled benchmark.  Higher is
  better for accuracy metrics and lower is better for loss.}
  \label{tab:controlled-main}
  \scriptsize
  \begin{tabular}{@{}lccccc@{}}
    \toprule
    Method & Control & Test acc. (\%) & Test loss & Worst (\%) & $\Delta$ (pp) \\
    \midrule
    PAdam & $\pc=0$ & $89.54\pm0.49$ & 0.2568 & 80.15 & $+0.00$ \\
    \rowcolor{bestgreen}
    PAdam & $\pc=0.3$ & $92.09\pm0.20$ & \best{0.2046} & 84.18 & $+2.55$ \\
    PAdam & $\pc=0.4$ & $92.10\pm0.12$ & 0.2154 & 84.27 & $+2.56$ \\
    Muon & -- & $91.33\pm0.36$ & 0.2661 & 80.82 & $+1.79$ \\
    \rowcolor{softblue}
    Affine probe & $\pa=1$ & \best{$92.66\pm0.20$} & 0.2353 & \best{85.98} & $+3.12$ \\
    Joint & $\pa=0$ & $90.98\pm0.26$ & 0.2728 & 80.02 & $+1.43$ \\
    Joint & $\pa=0.75$ & $92.03\pm0.31$ & 0.2413 & 83.09 & $+2.48$ \\
    \rowcolor{bestgreen}
    Joint & $\pa=1$ & \second{$92.57\pm0.09$} & 0.2299 & \second{85.93} & $+3.03$ \\
    Joint & $\pa=1.25$ & $91.60\pm0.40$ & 0.2306 & 84.47 & $+2.05$ \\
    \rowcolor{warningred}
    Joint & $\pa=1.5$ & $84.81\pm0.72$ & 0.3615 & 73.87 & $-4.74$ \\
    \rowcolor{warningred}
    Joint & $\pa=2$ & $64.09\pm0.90$ & 0.6463 & 57.65 & $-25.45$ \\
    \bottomrule
  \end{tabular}
  \par\vspace{2pt}\footnotesize\raggedright
  $\Delta$ accuracy is the paired mean difference from PAdam $\pc=0$.  All non-collapsed controlled settings shown above beat that baseline on accuracy in all five seeds; the collapsed settings do not.
\end{table}

\FloatBarrier

\subsection{Same-state replay verifies the proposed mechanisms}
\label{sec:mechanism_results}

\begin{figure}[t]
  \centering
  \includegraphics[width=\textwidth]{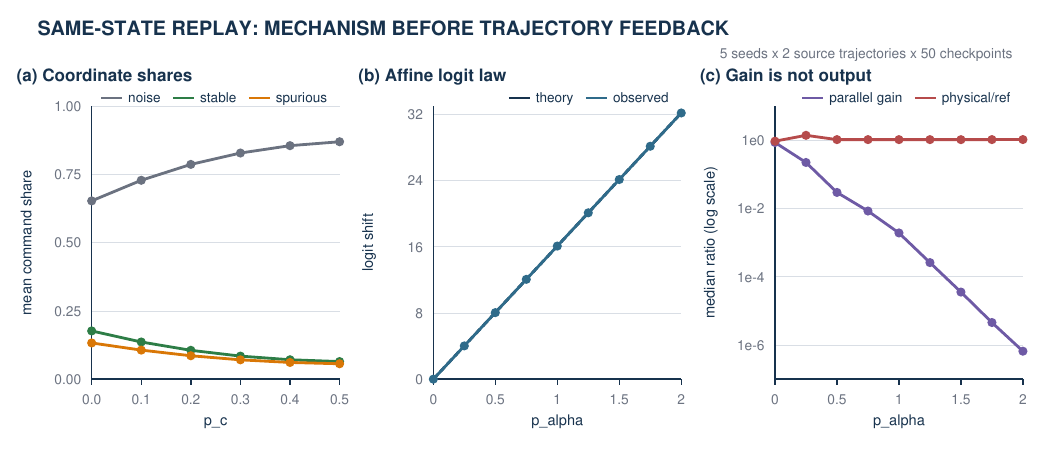}
  \caption{Same-state replay over 500 frozen states per control value.
  Panel (a) reports mean command-energy shares; dispersion across checkpoints
  is descriptive and omitted for readability.  Panel (b) subtracts each
  state's $\pa=0$ logit, so Section~\ref{sec:allocation_theory} predicts the common line
  $2\pa\log(3124)$.  Panel (c) separates attenuation parallel to the affine
  direction from the final physical-bias/reference ratio.}
  \label{fig:same-state-replay}
\end{figure}

\subsubsection{Coordinate redistribution}
\label{subsec:coordinate_replay}

Changing $\pc$ while the state is fixed produces a large and systematic
redistribution (Fig.~\ref{fig:same-state-replay}a).  The mean noise command share
increases from $0.653$ at $\pc=0$ to $0.870$ at $\pc=0.5$, while the stable
sparse share decreases from $0.177$ to $0.064$ and the spurious sparse share
decreases from $0.133$ to $0.056$.  These group aggregates are not expected to
be affine in $\pc$, because each group sums coordinates with heterogeneous
second moments.  Their smooth response nevertheless confirms that the control
acts before trajectory feedback.  It also explains why stronger control need
not improve performance indefinitely: low-moment noise coordinates can receive
increasing command energy together with useful sparse coordinates.

\subsubsection{Affine logit law and numerical verification}
\label{subsec:affine_replay}

For each frozen state, let $\rho(\pa)$ be the raw squared-energy share of the
scaled affine column.  Section~\ref{sec:allocation_theory} gives
\begin{equation}
  \operatorname{logit}\rho(\pa)-\operatorname{logit}\rho(0)
  =2\pa\log d.
  \label{eq:empirical-logit-law}
\end{equation}
Here $d=3124$, so the predicted slope is
$2\log(3124)=16.093739$.  A least-squares fit over all replay rows gives slope
$16.093742$, intercept $-1.65\times10^{-6}$, and
$R^2>0.9999999999$.  The maximum absolute deviation from the theoretical line
is $1.10\times10^{-3}$ and occurs only when $\rho$ is numerically saturated
near one.  This agreement is a verification of the implemented control law,
not evidence that large $\pa$ improves generalization.

\subsubsection{Preconditioner gain versus physical output}
\label{subsec:gain_output}

The median attenuation parallel to the affine direction falls by roughly six
orders of magnitude across the replay grid, while the physical bias update
relative to its Adam reference remains of order one
(Fig.~\ref{fig:same-state-replay}c).  This is the empirical counterpart of the
distinction proved in Section~\ref{sec:allocation_theory}: decreasing inverse-square-root gain does not
by itself imply a decreasing decoded bias update when the input column has
already been multiplied by $\alpha=d^{\pa}$ and the implementation applies
caps or output normalization.

\begin{table}[t]
  \centering
  \caption{Mechanism checks and their interpretation.}
  \label{tab:mechanism-checks}
  \small
  \begin{tabularx}{\textwidth}{@{}p{3.0cm}p{3.25cm}p{3.25cm}L@{}}
    \toprule
    Check & Theory or design target & Observation & Supported conclusion \\
    \midrule
    Coordinate replay & Nonzero response to $\pc$ at fixed state & Noise share $0.653\to0.870$; stable share $0.177\to0.064$ & $\pc$ directly changes group allocation; group utility is not implied \\
    Affine raw share & Logit slope $2\log d=16.093739$ & Fitted slope $16.093742$, $R^2>0.9999999999$ & Exact affine scaling law is implemented correctly \\
    Gain/output separation & Gain attenuation and decoded output are distinct & Parallel attenuation collapses; physical/reference remains $O(1)$ & Do not describe gain attenuation as automatic physical-bias suppression \\
    Trajectory response & Useful allocation need not be monotone in control & Intermediate optimum followed by high-$\pa$ collapse & Control strength requires a bounded operating region \\
    \bottomrule
  \end{tabularx}
\end{table}

\FloatBarrier

\subsection{Metric-specific optima}
\label{sec:robustness_results}

No setting wins every metric.  The affine probe leads mean accuracy and
worst-group accuracy; PAdam $\pc=0.3$ minimizes mean and P95 loss; and PAdam
$\pc=0.2$ minimizes non-collapsed P99 loss.  Reporting only accuracy would
therefore hide different optima for calibration- and tail-sensitive uses.

\begin{table}[t]
  \centering
  \caption{Metric-specific winners on the controlled benchmark.}
  \label{tab:metric-winners}
  \small
  \begin{tabular}{@{}lll r@{}}
    \toprule
    Metric & Direction & Best non-collapsed configuration & Value \\
    \midrule
    Final test accuracy & \upar & Affine probe $\pa=1$ & $92.66\%$ \\
    Final test loss & \downar & PAdam $\pc=0.3$ & $0.2046$ \\
    P95 sample loss & \downar & PAdam $\pc=0.3$ & $1.2316$ \\
    P99 sample loss & \downar & PAdam $\pc=0.2$ & $3.4135$ \\
    Worst-group accuracy & \upar & Affine probe $\pa=1$ & $85.98\%$ \\
    Elapsed time & \downar & Muon & $43.24$ s \\
    \bottomrule
  \end{tabular}
  \par\vspace{3pt}
  \footnotesize Joint $\pa=2$ has a numerically smaller P99 loss only because
  severe underfitting compresses predictions; it is excluded from the
  non-collapsed comparison.
\end{table}

The best methods also have small oracle-to-final accuracy gaps: approximately
$0.01$ percentage points for the affine probe and $0.03$ percentage points for
Joint $\pa=1$.  Their ranking is therefore not driven by a single early test
checkpoint.  Oracle values remain diagnostics only and are not used as the
selection rule.

\subsection{Cross-dataset transfer is promising but descriptive}
\label{sec:cross_dataset}

\begin{figure}[t]
  \centering
  \includegraphics[width=\textwidth]{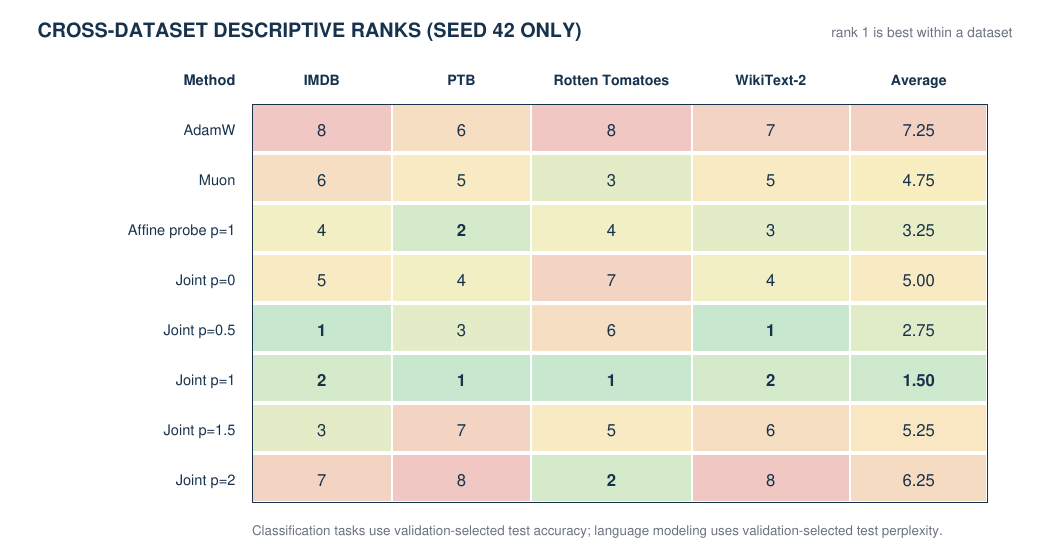}
  \caption{Within-dataset ranks for the seed-42 transfer suite.  Rank 1 is
  best.  The heatmap is descriptive because the suite contains one seed per
  method--dataset pair.}
  \label{fig:cross-dataset-ranks}
\end{figure}

Joint $\pa=1$ ranks first on PTB and Rotten Tomatoes and second on IMDB and
WikiText-2, giving the best average rank, $1.50$
(Fig.~\ref{fig:cross-dataset-ranks}).  Joint $\pa=0.5$ wins IMDB and WikiText-2 but
falls to sixth on Rotten Tomatoes, giving average rank $2.75$.  High control is
again risky: Joint $\pa=2$ ranks eighth on both language-modeling datasets,
although it remains second on Rotten Tomatoes.  The transfer evidence thus
supports a moderate operating region around $\pa\in[0.5,1]$, but it does not
justify a population-level claim until multiple seeds are completed.

\begin{table}[t]
  \centering
  \caption{Validation-selected transfer results for seed 42.  Classification
  reports accuracy (higher is better); language modeling reports perplexity
  (lower is better).}
  \label{tab:transfer-main}
  \scriptsize
  \begin{tabular}{@{}lccccc@{}}
    \toprule
    Method & IMDB (\%) & PTB PPL & RT (\%) & WT2 PPL & Avg. rank \\
    \midrule
    AdamW & 83.78 & 140.08 & 73.73 & 164.91 & 7.25 \\
    Muon & 84.54 & 131.52 & 77.11 & 151.65 & 4.75 \\
    Affine probe $\pa=1$ & 85.08 & \second{129.70} & 76.64 & 149.96 & 3.25 \\
    Joint $\pa=0$ & 84.89 & 130.58 & 75.98 & 150.56 & 5.00 \\
    Joint $\pa=0.5$ & \best{85.88} & 129.79 & 76.17 & \best{149.64} & \second{2.75} \\
    \rowcolor{bestgreen}
    Joint $\pa=1$ & \second{85.41} & \best{129.08} & \best{77.58} & \second{149.65} & \best{1.50} \\
    Joint $\pa=1.5$ & 85.25 & 142.27 & 76.36 & 162.46 & 5.25 \\
    \rowcolor{warningred}
    Joint $\pa=2$ & 84.52 & 156.86 & \second{77.49} & 183.95 & 6.25 \\
    \bottomrule
  \end{tabular}
  \par\vspace{2pt}\footnotesize\raggedright
  Bold and underlining mark the best and second-best values within a column.  These markings are rankings, not statistical significance.
\end{table}

\FloatBarrier

\section{Conclusion}
\label{sec:conclusion}

This paper asked a deliberately limited question: can an optimizer input
control the normalized distribution of an update across prescribed parameter
channels, separately from a global change in update magnitude?  The answer is
yes at a frozen training state.  When one nonzero scalar multiplies the entire
update, normalized allocation is invariant.  In contrast, the coordinate
exponent $p_c$ changes pairwise coordinate log-odds according to an exact
affine law, while the affine exponent $p_\alpha$ changes the augmented
weight--bias geometry through a rank-one positive-semidefinite perturbation
and controls raw affine participation through an exact logistic law.

The spectral analysis also identifies a distinction that is easy to lose in
an implementation-level discussion.  Increasing the scaled bias column can
reduce ideal inverse-square-root gain along the corresponding direction even
when the norm of the transformed physical bias column increases.  Raw input
participation, preconditioner gain, and decoded output are therefore different
objects.  The affine decomposition makes the structural origin of this
direction explicit at a fixed mini-batch, quantifies the remainder introduced
by changing batch means under momentum, and shows that finite polynomial
spectral iterations preserve singular-vector subspaces without requiring them
to equal an exact polar or regularized-inverse map.

The experiments support these mechanism-level statements and delimit their
practical range.  Same-state replay recovers the predicted coordinate
redistribution and affine-logit slope to numerical precision.  Across five
paired seeds on the controlled benchmark, PAdam with $p_c=0.3$ raises mean
test accuracy from $89.54\%$ to $92.09\%$ and minimizes mean test loss, while
joint affine control with $p_\alpha=1$ reaches $92.57\%$ accuracy with low
seed-to-seed variation.  Stronger control is not uniformly better: at
$p_\alpha=2$, training underfits and mean test accuracy falls to $64.09\%$.
The four-task transfer suite favors moderate affine control descriptively, but
its single-seed design does not support uncertainty estimates or a
population-level superiority claim.

Three limitations define the remaining gap.  First, exact reachability is a
one-step result and is not controllability of the stochastic multi-step
training recursion.  Second, the controlled benchmark supplies only five
seeds, and the transfer suite supplies one; additional preregistered,
validation-selected replications are needed.  Third, a common optimization
protocol isolates the allocation controls but does not replace a fully tuned
optimizer comparison.  Future work should develop closed-loop schedules that
target measured allocation coordinates, analyze their stability over entire
trajectories, and test whether the same operating regions persist under
independently tuned learning rates, larger architectures, and broader data
distributions.

The resulting conclusion is mechanistic rather than universal: allocation is
an exactly analyzable optimizer output and can be actuated independently of a
global scale at a fixed state, but the usefulness of a reachable allocation
remains conditional on the task and trajectory.  This separation turns a
qualitative account of implicit bias into a testable control problem while
retaining the assumptions needed for each claim.

\backmatter


\bibliography{references}

\end{document}